\theoremstyle{plain} 
\newtheorem{theorem}{Theorem} 
\newtheorem{lemma}{Lemma}     
\theoremstyle{definition}
\newtheorem{remark}{Remark}   
\begin{document}
	
\title{Timeliness-Oriented Scheduling and Resource Allocation in Multi-Region Collaborative Perception}
    
	\author{ 
     Mengmeng Zhu,~\IEEEmembership{Student Member,~IEEE}, Yuxuan Sun,~\IEEEmembership{Member,~IEEE}, Yukuan Jia, \\
     Wei Chen,~\IEEEmembership{Senior Member,~IEEE}, Bo Ai,~\IEEEmembership{Fellow,~IEEE}, and Sheng Zhou,~\IEEEmembership{Senior Member,~IEEE}
    
    \thanks{Mengmeng Zhu, Yuxuan Sun (Corresponding Author), Wei Chen and Bo Ai are with the School of Electronic and Information Engineering, Beijing Jiaotong University, Beijing 100044, China. (e-mail: \{mengmengzhu, yxsun, weich, boai\}@bjtu.edu.cn)
    
    Yukuan Jia and Sheng Zhou are with Beijing National Research Center for Information Science and Technology, Department of Electronic Engineering, Tsinghua University, Beijing 100084, China. (e-mail: jyk20@mails.tsinghua.edu.cn, sheng.zhou@tsinghua.edu.cn)
    }}



	\maketitle
    \begin{abstract}
    Collaborative perception (CP) is a critical technology in applications like autonomous driving and smart cities. It involves the sharing and fusion of information among sensors to overcome the limitations of individual perception, such as blind spots and range limitations. However, CP faces two primary challenges. First, due to the dynamic nature of the environment, the timeliness of the transmitted information is critical to perception performance. Second, with limited computational power at the sensors and constrained wireless bandwidth, the communication volume must be carefully designed to ensure feature representations are both effective and sufficient.
    This work studies the dynamic scheduling problem in a multi-region CP scenario, and presents a Timeliness-Aware Multi-region Prioritized (TAMP) scheduling algorithm to trade-off perception accuracy and communication resource usage. 
    Timeliness reflects the utility of information that decays as time elapses, which is manifested by the perception performance in CP tasks.
    We propose an empirical penalty function that maps the joint impact of Age of Information (AoI) and communication volume to perception performance.
    Aiming to minimize this timeliness-oriented penalty in the long-term, and recognizing that scheduling decisions have a cumulative effect on subsequent system states, we propose the TAMP scheduling algorithm. TAMP is a Lyapunov-based optimization policy that decomposes the long-term average objective into a per-slot prioritization problem, balancing the scheduling worth against resource cost.
    We validate our algorithm in both intersection and corridor scenarios with the real-world Roadside Cooperative perception (RCooper) dataset. Extensive simulations demonstrate that TAMP outperforms the best-performing baseline, achieving an Average Precision (AP) improvement of up to 27\% across various configurations.
    \end{abstract}	
    
	\begin{IEEEkeywords}
    Age of information, collaborative perception, resource allocation, online scheduling, autonomous driving
    \end{IEEEkeywords}
    
    \section{Introduction}
    A new paradigm of comprehensive, intelligent perception is pivotal in applications such as autonomous driving and urban traffic monitoring \cite{jiang2021_6G}. Intelligent vehicles and roadside units perceive their surroundings using sensors like cameras and LiDARs. However, the effectiveness of an individual sensor is often compromised by a restricted field of view, a finite sensing range, and vulnerability to occlusions \cite{Xiao2023Occlusions}. To overcome these limitations, collaborative perception (CP) has emerged as a crucial solution \cite{han2023CP}. In CP, multiple sensors share sensing information via wireless channel aiming to expand the collective perceptual range and mitigate the performance degradation caused by occlusions and limited fields of view.
    
    However, existing CP approaches mainly focus on collaboration within a single region, treat each region as an independent entity. This design fails to account for the fact that regions may compete for shared communication and computational resources. Consequently, single region CP systems are insufficient for applications, such as traffic monitoring across numerous intersections and road segments \cite{Wang2025Grouping-Based}.    
    Thus we need to consider a multi-region CP system, which typically consists of two-levels. At the \emph{inter-region} level, a central Base Station (BS) orchestrates the operations and allocates resource across regions. At the \emph{intra-region} level, sensors within each region conduct CP to complement their individual fields of view and cover mutual blind spots. To facilitate CP, we adopt feature-level fusion \cite{wang2020v2vnet, xu2025v2x, xie2025s4, gao2025stamp}. This widely-used paradigm balances between two extremes: raw-level fusion, which transmits raw sensor data without information loss but incurs prohibitive bandwidth costs \cite{chen2019cooper}, and object-level fusion, which is bandwidth-efficient but may lose critical details \cite{arnold2020cooperative}.
    
    The primary goal of a multi-region CP system is to ensure the \emph{timeliness of information for all monitored regions}. 
    Timeliness refers to the value of sensing information, which diminishes rapidly due to dynamic environments \cite{zhou2024task}. Age of Information (AoI) is a widely adopted metric to quantify the freshness of information, measuring the time elapsed since the generation of the most recent received information \cite{kaul2012real, zhou2024task, sun2023optimizing}. 
    For instance, if a cooperating sensor detects a passing vehicle, due to the mobility of vehicle, delayed sensing information results in inaccurate estimates of object positions.
    This information lag leads to a degradation in perception performance.
    
    To optimize multi-region CP performance, a critical problem arises: how can a BS efficiently manage massive data streams from multiple regions under constrained communication and computing resources? This problem can be decomposed into two key challenges.
    1) At the intra-region level, there is a fundamental \emph{trade-off between feature granularity and timeliness}. A larger communication volume provides richer information but increases transmission and computing latency, thereby degrading data freshness. The challenge is determining the optimal communication volume for each region to balance granularity-timeliness trade-off.
    2) At the inter-region level, the challenge is the \emph{complex region selection} problem. Due to limited bandwidth and server processing capability, only a subset of regions can be served simultaneously. 
    Some regions have been recently scheduled, while others have not been scheduled for a long time. As a result, they have different levels of scheduling urgency. This creates the need for a metric to quantify the real-time scheduling priority of each region. Moreover, the system is highly dynamic in terms of channel states and targets, necessitating an effective stochastic optimization policy that can make long-term decisions without requiring full knowledge of future system states.

    For \emph{intra-region} CP, existing research has first addressed to manage communication overhead. Feature-level fusion \cite{wang2020v2vnet, xu2025v2x, xie2025s4, gao2025stamp} has become the dominant paradigm. A focus within this paradigm is reducing data payloads by transmitting only the most salient information. To achieve this, researchers have developed various techniques. Task-adaptive codebooks \cite{hu2024pragmatic} have been proposed to ensure that only the information strictly necessary for the downstream task performed by a collaborator is transmitted. A channel-adaptive compression scheme \cite{zhou2024task} extracts the most valuable semantic information by adapting to real-time wireless communication constraints. The information bottleneck principle \cite{fang2024r-acp} has also been leveraged for an encoding method that adjusts video compression rates based on the task relevance of the content, thereby balancing accuracy and communication cost.
    
    Meanwhile, AoI has been widely adopted to quantify information freshness \cite{kaul2012real, zhou2024task, sun2023optimizing}. This linear metric measures the time elapsed since the generation of the most recently received information. Early works focused on minimizing AoI in real-time monitoring and control systems \cite{qin2022timeliness, kalor2022timely, qin2023timeliness}. However, the linear nature of AoI is often insufficient to capture the non-linear manner in which task performance degrades with time delay. 
    To address this, non-linear metrics have been proposed. For instance, Urgency of Information (UoI) \cite{zheng2020urgency} measures the non-linear, time-varying importance of status updates based on their context. Age of Usage Information (AoUI)  \cite{xie2023minimizing} jointly captures the freshness and usability of correlated data in IoT systems. For specific needs of multi-agent sensing, the Age of Perceived Targets (AoPT) \cite{fang2024r-acp} captures the collective data timeliness from multiple streaming views observing the same target.
    However, a critical gap remains in addressing their inherent trade-off. Existing frameworks lack a mechanism to characterize the timeliness requirements of each sensor considering its varying importance and data correlation, and thereby optimize individual communication volumes to enhance the overall timeliness and accuracy of the CP task.    
    
    For \emph{inter-region} scheduling, existing research can be viewed from two perspectives: designing priority metric and scheduling algorithm. First, the definition of scheduling priority of regions has evolved significantly. Age of Processed Information (AoPI) \cite{li2024towards} was introduced as a priority metric that integrates the recognition accuracy with transmission and computation efficiency, moving beyond pure timeliness. Customizable, task-specific penalty functions of AoI were formulated to define priority \cite{sun2023optimizing}, allowing the system to weigh freshness against communication and computation delays according to specific application needs. Additionally, priority metrics have been developed based on the direct ``perceptual gain" of a sensor to tasks, the importance and complementarity of sensor data in dynamic mobile environments, and implicit definitions derived from joint optimization problems aimed at minimizing execution delay \cite{hou2025enhancing, jia2025c-mass, xiao2022perception}.
    
    Second, to find an optimal algorithm based on a given priority metric, various scheduling models have been explored. Foundational works used model-based optimization, with Markov Decision Processes (MDPs). In this paradigm, researchers formulated scheduling problems as infinite horizon MDPs to minimize AoI, proving that the optimal policies often have a simple, threshold-based structure \cite{zhou2019minimum, kalor2022timely, tang2020minimizing}. However, to handle the complexity and dynamics of real-world environments, the field has adopted data-driven techniques like Deep Reinforcement Learning (DRL). DRL-based approaches can effectively tackle high-dimensional state and action spaces, ultimately learning near-optimal policies without needing a system model \cite{xie2021reinforcement, sun2021aoi, qin2023timeliness}.
    However, two gaps persist for multi-region CP scheduling. First, there is a lack of a practical, effective, timeliness-aware priority metric for CP. Because CP is transmission and computation-intensive, a significant latency exists between when a region is scheduled and when its data is processed and fused. Existing metrics often fail to account for this delay, making them poor predictors of future performance. Second, the current trend towards complex, data-driven solutions like DRL, with their high training overhead and computational demands, overlooks the need for more practical and lightweight scheduling policies that are better suited for real-time, resource-constrained environments. 
    
    In the context of multi-region CP, this paper proposes a timeliness-oriented scheduling framework that dynamically selects regions and allocates resources to maximize global perception performance. Our main contributions are as follows:   
    \begin{itemize}
        \item We introduce a novel scheduling framework for multi-region CP. A new \emph{penalty function} tailored for the CP task is designed, modeling the non-linear degradation of perception performance by jointly considering the timeliness and communication volume of sensing information.
        \item We formulate the multi-region scheduling problem as a stochastic optimization problem.
        Using KKT conditions, we derive a scheduling priority metric capturing the persistent effects of decisions. We design \emph{{T}imeliness-{A}ware {M}ulti-region {P}rioritized ({TAMP})} scheduling algorithm, for region scheduling and resource allocation with resource constraints and system uncertainty.
        \item We validate our scheduling algorithm using the \emph{real-world roadside dataset} RCooper \cite{hao2024rcooper}. By establishing an empirical study with intersection and corridor scenarios, we fit the penalty function to inference data obtained from these scenarios and explore the practical performance of the proposed algorithm. This demonstrates the feasibility of our algorithm in realistic settings.
        \item Extensive simulations are conducted to evaluate the performance of the proposed algorithm for different scenario settings and rate distributions. Our results show that the proposed algorithm improves the Average Precision (AP) by up to 27\% compared to the baselines.
    \end{itemize}

    \section{System Model} \label{system model}
    \subsection{System Overview}
    As illustrated in Fig.~\ref{system1}, we consider a system composed of a base station (BS) co-located with an edge server, and a set of regions to be monitored, denoted by $\mathcal{A}$. Each region $ a \in \mathcal{A}$ is equipped with a set of sensors (e.g., cameras, LiDARs), which we represent by $\mathcal{N}_a$. The set of all sensors is denoted by $\mathcal{N}=\cup_{a \in \mathcal{A}}\mathcal{N}_a$. 
    To overcome the limitation of single view perception, sensors within each region perform CP. They then transmit their processed perceptual data to the BS for feature-level fusion and object detection. 
    Each time a CP task is completed, the BS obtains the latest information for that region, which then gradually becomes stale.
    Due to limited communication and computation resources, the BS schedules a limited number of regions for CP at each time. The objective is to design a scheduling algorithm that maximizes overall perception performance.
    
    \begin{figure}[!t]
    \centering
    \includegraphics[width=0.48\textwidth]{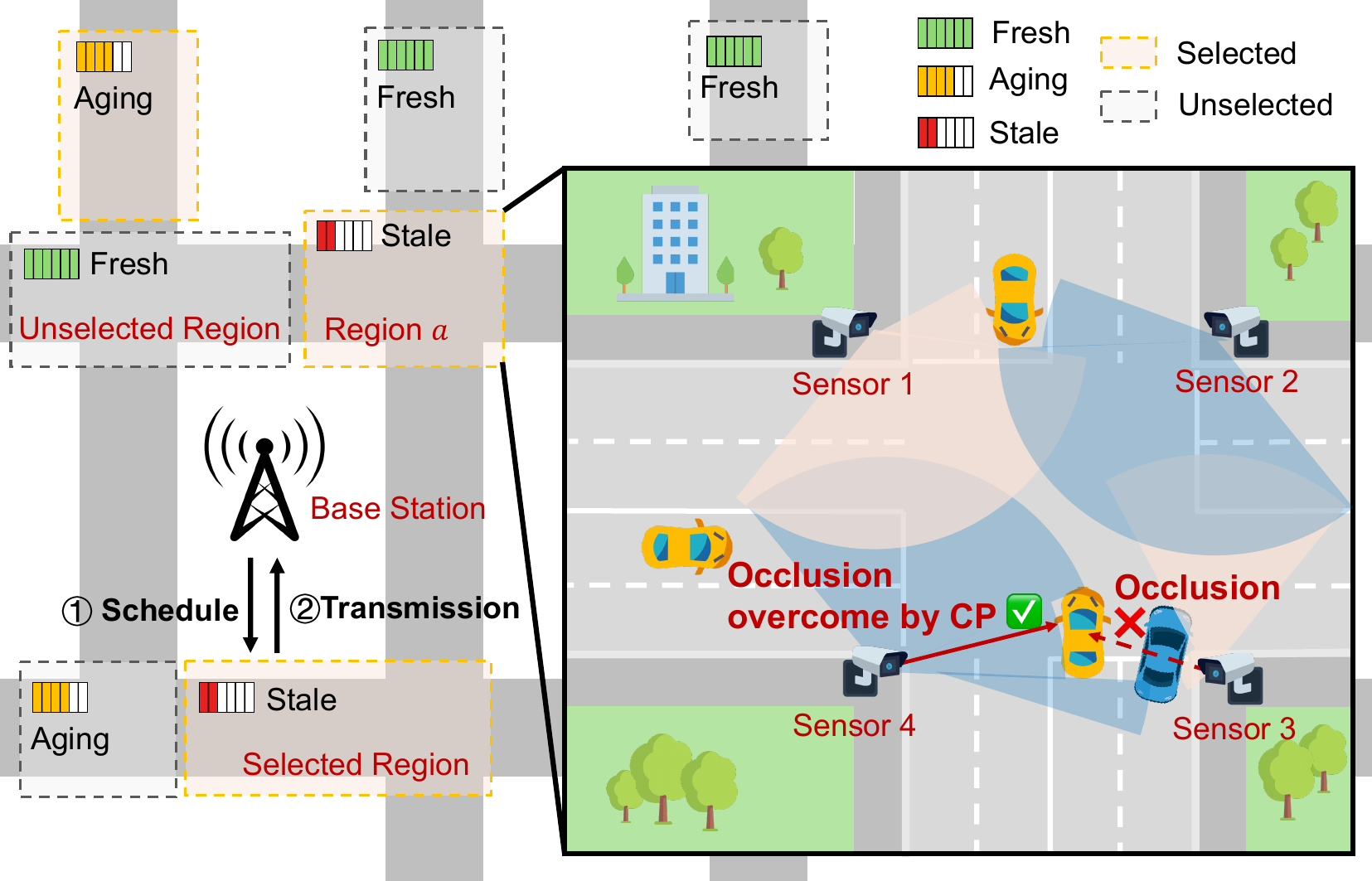}
    \caption{An illustration of the system architecture, where a BS manages multiple CP regions.}
    \label{system1}
    \end{figure}

    This system operates in discrete slots, indexed by $k$, and entire process is orchestrated by the BS.
    The system workflow can be broken down into three sequential phases: 1) region selection and feature extraction, 2) bandwidth allocation and feature transmission, and 3) feature fusion and detection.
    
    In the first phase, the BS performs \emph{region selection}. The BS chooses regions $\mathcal{A}_{\text{selected},k}$ from the idle regions $\mathcal{A}_{\text{idle},k}$ for scheduling and adds them to the set of active regions, denoted by $\mathcal{A}_k \subseteq \mathcal{A}$. A region is active if its CP task has been started but is not yet completed. Thus, 
    $\mathcal{A}_k = (\mathcal{A} \setminus \mathcal{A}_{\text{idle},k}) \cup \mathcal{A}_{\text{selected},k}$.
    Following the scheduling decisions, for sensors in each selected region $a \in \mathcal{A}_{\text{selected},k}$, start to collect raw perceptual data from the environment and extract features.
    In the second phase, for each sensor in active region $a \in \mathcal{A}_k$, the BS conducts \emph{resource allocation}, assigning bandwidth $B_{n,k}$ to sensor $n$ for data transmission. The features from sensors are then compressed to fit the allocated bandwidth and are subsequently transmitted to the BS via wireless channels.
    In the third phase, the BS fuses the received features from all sensors within each selected region. The fusion result is then used for downstream tasks, such as object detection and tracking.
    
    \subsection{Feature Extraction Model}
    When the BS schedules region $a$ in slot $k$, we assume the delay for broadcasting the control message to all sensors in that region is negligible. Upon receiving control message, each sensor $n \in \mathcal{N}_a$ processes its raw sensing data to produce an extracted feature, denoted as $\mathcal{F}_n^{\text{ext}}$. 
    To manage the transmission and computing resources consumed per region, we impose a long-term average constraint on the communication volume for each region $a$:
    \begin{equation} \limsup_{K \to \infty} \frac{1}{K} \sum_{k=1}^{K} \mathbb{E}\left[b_{a,k}\right] \leq \Gamma_a, \quad \forall a \in \mathcal{A}, 
    \end{equation}
    where $\Gamma_a$ is the predefined communication volume budget for region $a$. 
    Notably, the BS allocates communication volume $b_{a,k}$ to each region $a$ based on channel conditions and the current state of region $a$. 
    The extracted feature $\mathcal{F}_n^{\text{ext}}$ from sensor $n \in \mathcal{N}_a$ is compressed on-demand into a transmittable feature $\mathcal{F}_n^{\text{tr}}$ that matches the allocated communication volume $b_{a,k}$. Let $\left\lvert \cdot \right\rvert$ represents the data size of feature. Thus, $\sum_{n \in \mathcal{N}_a} \left\lvert \mathcal{F}_n^{\text{tr}} \right\rvert=b_{a,k}$. The communication volume allocation and feature compression method for sensors within each region is detailed in Section \ref{scheduling algorithm}.
    
    The feature extraction latency for an individual sensor $n$, denoted by $d_{n,k}^{\text{ext}}$, is modeled as a random variable following a shifted exponential distribution \cite{wu2020latency, zhang2021coded, sun2022coded}. The subsequent compression delay is considered negligible. Since all sensors operate in parallel, the overall phase delay for region $a$ is dictated by the sensor that finishes last. Therefore, it is expressed as:
    \begin{equation}
        d_{a,k}^{\text{ext}}=\max_{n \in \mathcal{N}_a}\left\{d_{n,k}^{\text{ext}}\right\}.
    \end{equation}

    \subsection{Feature Transmission Model}
    Following extraction, the features are transmitted to the BS. The achievable transmission rate of sensor $n$ at slot $k$, denoted by $r_{n,k}$, is subject to the spectral efficiency $\eta_k$ (in b/s/Hz), and is given by:
    \begin{equation} 
        r_{n,k} = B_{n,k} \cdot  \eta_k, \label{rate} 
    \end{equation}
    where $B_{n,k}$ is the bandwidth allocated to sensor $n$. The spectral efficiency $\eta_k$ is a variable determined by the channel state, and we assume it remains constant within each task.

    Given the total wireless bandwidth $B_\text{total}$, we need to distribute it efficiently among the sensors in the active regions $\mathcal{A}_k$. We jointly determine the communication volume $b_{n,k}$ and the bandwidth $B_{n,k}$ of sensor $n$. This decision is based on the importance of the sensor data and their correlation. The specific algorithm for this allocation is detailed in Section \ref{scheduling algorithm}. The sum of the allocated bandwidth must satisfy:
    \begin{equation}
    \sum_{a \in \mathcal{A}_k} \sum_{n \in \mathcal{N}_a} B_{n,k} \leq B_\text{total}.
    \end{equation}
    The transmission delay for region $a$, denoted by $d_{a,k}^{\text{tr}}$ is determined by the slowest sensor and is given by:
    \begin{equation}
    d_{a,k}^{\text{tr}} = \max_{n \in \mathcal{N}_a} \left\{\frac{b_{n,k}}{r_{n,k}}\right\}.    
    \end{equation}
    
    \subsection{Feature Detection Model}
    Once the features are successfully uploaded, the BS performs feature fusion and processing. First, the BS fuses the features received from all active sensors in region $a$. Let $\mathcal{F}_{n,k}^{\text{tr}}$ be the feature set from sensor $n \in \mathcal{N}_a$. The fusion process aggregates these individual sets into a comprehensive regional feature set, $\mathcal{F}_{a,k}^{\text{tr}} = \bigcup_{n \in \mathcal{N}_a} \mathcal{F}_{n,k}^{\text{tr}}$. This fused feature set is then used for downstream tasks, such as object detection. The feature processing delay for the task of region $a$, started at slot $k$, denoted by $d_{a,k}^{\text{det}}$, is modeled as a random variable following a shifted exponential distribution \cite{wu2020latency, zhang2021coded, sun2022coded}. 
    However, the BS is constrained by its computational resources (e.g., GPU capacity), which limits the number of CP tasks that can be active concurrently. Since $\mathcal{A}_k$ is the set of active regions with ongoing tasks in slot $k$, its size is upper-bounded by $M$. This imposes the following system constraint: 
    \begin{equation} 
    \left\lvert \mathcal{A}_k \right\rvert \leq M, \quad \forall k. 
    \end{equation}

    The physical delay for the CP task in region $a$ initiated at time $k$, denoted by $d_{a,k}^{\text{sec}}$ (in seconds), is defined as the sum of the constituent delays from the three sequential phases: extraction, transmission, and detection:
    \begin{equation}
        d_{a,k}^\text{sec} \triangleq d_{a,k}^{\text{ext}} + d_{a,k}^{\text{tr}} + d_{a,k}^{\text{det}}.
    \end{equation}
    Let $\tau$ be the slot length. To ensure consistency with the discrete slot model, the final task delay $d_{a,k}$ (in slots) is calculated by rounding the physical delay up to the nearest integer:
    \begin{equation}
        d_{a,k} = \left\lceil \frac{d_{a,k}^\text{sec}}{\tau} \right\rceil.
    \end{equation}

    \subsection{Timeliness Metric for CP}
    The timeliness metric is jointly affected by region AoI and the communication volume of each sensor. 
    The AoI of region $a$ in slot $k$, denoted by $h_{a,k}$, represents the time elapsed since the data for the last successfully completed CP task was generated. The evolution of AoI is illustrated in Fig.~\ref{aoi_penalty}.
    If region $a$ completes a CP task upon slot $k=k'_m$, the AoI $h_{a,k}$ is reset to the total task delay $d_{a,k_m}$.
    The AoI evolves according to the following dynamics:
    \begin{equation}
    h_{a,k} =
    \begin{cases}
        d_{a,k_m},   & \text{if } k = k_m', \\
        h_{a,k-1} + 1, & \text{otherwise}.
    \end{cases} \label{eq:aoi}
    \end{equation}
    
    Let $\bm{b}_{a,k}=\left\{b_{n,k} \mid n \in \mathcal{N}_a\right\}$ denote the allocated communication volumes for the sensors in region $a$ in slot $k$. We introduce a penalty function $f_a(h_{a,k}, \bm{b}_{a,k})$ as timeliness metric. The dynamics of this penalty function are illustrated in Fig.~\ref{aoi_penalty}. 
    We define the $m$-th scheduling as the one that initiates the CP task in slot $k_m$ and finishes in slot $k'_m$. We define the $m$-th interval as the period between the completion of two consecutive tasks, from slot $k_m'$ to $k_{m+1}'$. Since the communication volume allocated for a task only affects the perception performance after that task is completed, the performance of the $m$-th interval is affected by the communication volume $\bm{b}_{k_m}$, allocated at the $m$-th scheduling instant.
    The function $f(h,\bm{b})$ will be fitted on the real-world roadside CP dataset in section \ref{empirical}, and is characterized by two properties.
    First, it is a non-decreasing function of the AoI $h$. This reflects that more stale information results in a higher penalty. 
    Second, it is a non-increasing function of the communication volume $\bm{b}$. This represents that transmitting a larger data volume generally leads to higher perception quality, thus incurring a lower penalty.
    This penalty function formalizes a fundamental trade-off. On one hand, scheduling frequent updates with large data volumes reduces the penalty. On the other hand, this approach consumes significant communication and computational resources, which in turn leads to increased delay. Therefore, the objective is to design a scheduling algorithm that manages this trade-off to minimize the long-term average penalty.
    \begin{figure}[!t]
        \centering
        \includegraphics[width=0.48\textwidth]{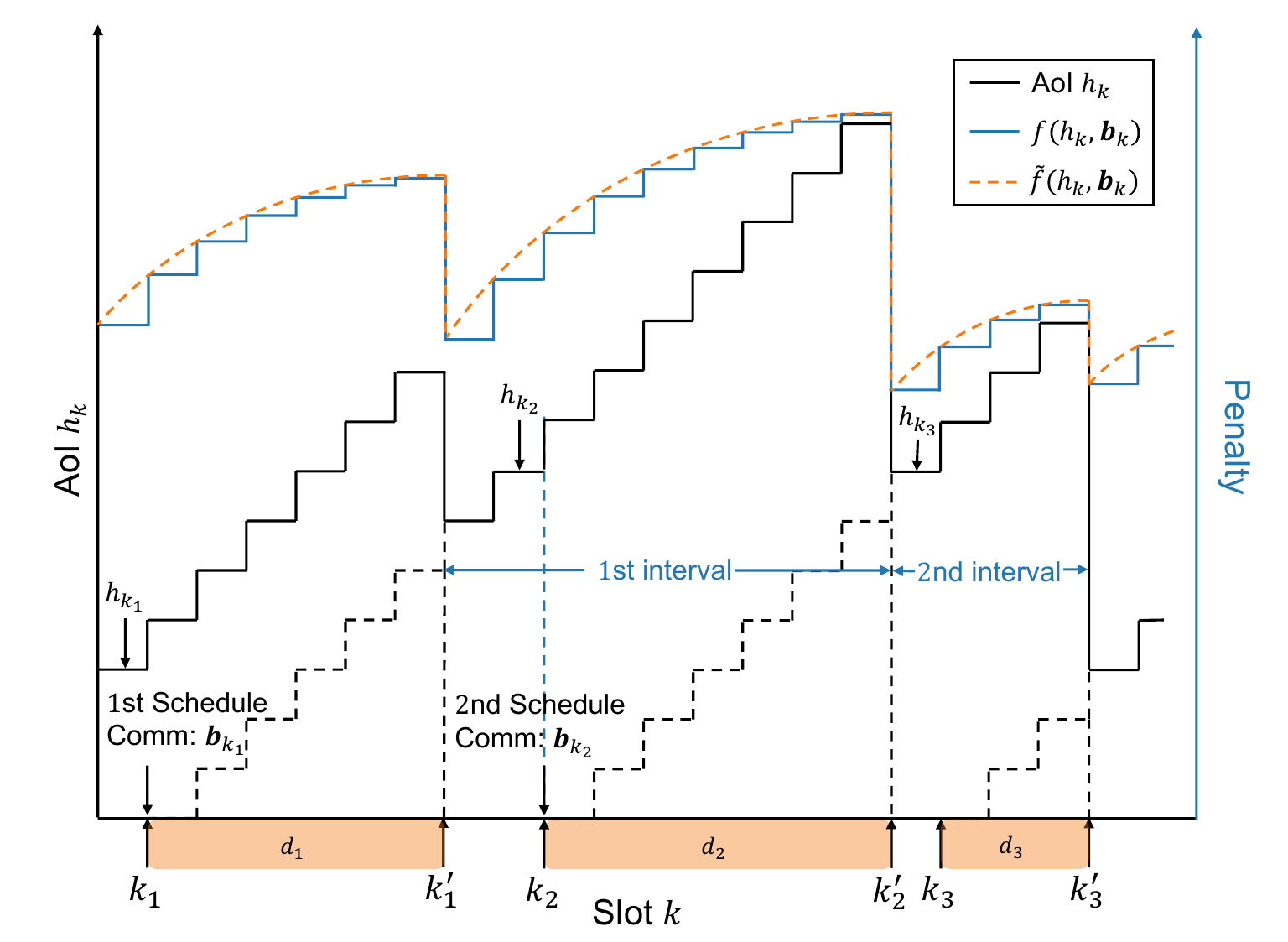}
        \caption{Timeliness-oriented penalty function with AoI and the communication volume in a region. The AoI $h_k$ increases in a staircase manner over time and is reset only upon the completion of a CP task.
        The $m$-th schedule begins at slot $k_m$ with the allocated communication volume (Comm) $\bm{b}_{k_m}$, and the task is completed upon slot $k'_m$.
        The $m$-th interval is defined as the duration between the completion time of two tasks, from $k'_m$ to $k'_{m+1}$. Note that $h_{k_m}$ indicates the AoI at the $m$-th scheduling time.
        }
        \label{aoi_penalty}
    \end{figure}

    \subsection{Problem Formulation}
    Our objective is to design a scheduling algorithm to minimize the long-term average penalty across all regions.
    Recall that the scheduling decisions are made at the beginning of each slot, while the CP task delay may span one or multiple slots.
    We define three scheduling decision variables in each slot $k$.  
    The first is the \emph{region scheduling vector}, $\bm{u}(k) \triangleq [u_{1,k}, \dots, u_{A,k}]$, where $u_{a,k} \in \{0, 1\}$ is a binary indicator equal to $1$ if region $a$ is scheduled in slot $k$, and $0$ otherwise. 
    The second is the \emph{bandwidth allocation matrix}, $\bm{B}(k) \triangleq [\bm{B}_{1,k}, \dots, \bm{B}_{A,k}]^T$, where $\bm{B}_{a,k}=\left\{B_{n,k} \mid n \in \mathcal{N}_a\right\}$ represents the bandwidth allocated to sensors in region $a$.
    The third is the \emph{communication volume allocation matrix}, $\bm{b}(k) \triangleq [\bm{b}_{1,k}, \dots, \bm{b}_{A,k}]^T$. Recall that $\bm{b}_{a,k}=\left\{b_{n,k} \mid n \in \mathcal{N}_a\right\}$ is the communication volume of sensors in region $a$, while $b_{n,k} \in [b_{\min}, b_{\max}]$, with $b_{\min}$ and $ b_{\max}$ representing the minimum and maximum. 
    Let $b_{a,k}$ be the communication volume of region $a$ in slot $k$, i.e., $b_{a,k} = \sum_{n \in \mathcal{N}_a} b_{n,k}$.
    Let $\bar{b}_a$ be the long-term average communication volume:    
    \begin{equation}
        \bar{b}_a \triangleq \limsup_{K \to \infty} \frac{1}{K}  \mathbb{E} \left[\sum_{k=1}^{K} b_{a,k} \right].
    \end{equation}
    Let $\bar{f}_a$ be the long-term average penalty for region $a$:
    \begin{equation}
        \bar{f}_a \triangleq \limsup_{K \to \infty} \frac{1}{K}  \mathbb{E} \left[\sum_{k=1}^{K} f_a(h_{a,k}, \bm{b}_{a,k}) \right].
    \end{equation}
    Using this notation, the problem can be expressed as $\mathcal{P}1$. The objective \eqref{eq:objective} is to minimize the sum of the long-term average penalties over all regions. Constraint \eqref{eq:region_constraint} imposes a long-term average communication volume budget $\Gamma_a$ on each region $a$. Constraints \eqref{bandwidth_bound} and \eqref{const:concur_sched} are state-based, limiting the instantaneous bandwidth usage and the total number of currently active regions in the state set $\mathcal{A}_k$. The decision $u_{a,k}=1$ initiates a new CP task, whereupon region $a$ is added to the set $\mathcal{A}_k$, where it remains until the task is completed. Finally, constraints \eqref{binary_var} and \eqref{con_var} define the binary and continuous domains for the scheduling and communication volume allocation variables.   
    \begin{subequations} \label{eq:problem_P1}
    \begin{align}
    \mathcal{P}1: \min_{\{\bm{u}(k), \bm{b}(k) , \bm{B}(k)\}} &~ \sum_{a \in \mathcal{A}} \bar{f}_a \label{eq:objective} \\
    \text{s.t.~~~~} \quad & \bar{b}_a \le \Gamma_a,  \forall a \in \mathcal{A}, \label{eq:region_constraint} \\
    & \sum_{a \in \mathcal{A}_k} \sum_{n \in \mathcal{N}_a} B_{n,k} \leq B_\text{total}, \forall k,
    \label{bandwidth_bound} \\
    & \left\lvert \mathcal{A}_k \right\rvert \leq M, \forall k,
    \label{const:concur_sched} \\
    & u_{a,k} \in \{0, 1\}, \forall a \in \mathcal{A}, \forall k, \label{binary_var} \\
    & b_{n,k} \in [b_{\min}, b_{\max}], \forall n \in \mathcal{N}, \forall k. \label{con_var}
    \end{align}
    \end{subequations}
    
    Solving Problem $\mathcal{P}1$ is challenging due to several intertwined difficulties.
    First, the problem is \emph{stochastic} and involves both a \emph{long-term average} objective and constraint, requiring online decisions without prior knowledge of future system states (e.g., channel conditions). Satisfying these requirements is inherently difficult with slot-by-slot decisions.
    Second, scheduling impact is \emph{delayed and cumulative}. The benefit of an update accrues over time, making instantaneous performance a poor long-term quality indicator.
    Third, the region selection decision is \emph{combinatorial}. Since only a limited number of regions can be served simultaneously, there are numerous possible combinations, necessitating a concise method for selecting the optimal subset.
    Therefore, we leverage Lyapunov optimization theory to transform this intractable stochastic problem into a sequence of tractable, deterministic problems solved in each slot, enabling the design of a scheduling index to prioritize the region scheduling strategy.

    \section{Timeliness-Aware Multi-region Prioritized Scheduling Algorithm} \label{scheduling algorithm}
    In this section, we first establish the theoretical analysis on the long-term average penalty using an oracle perspective, which yields the design principle for our policy. We then leverage the Lyapunov optimization framework to derive the actionable priority metric, balancing scheduling worth against resource cost. Finally, we formalize the complete TAMP scheduling algorithm, detailing the competitive selection and resource allocation procedures. 
    
    \subsection{Average Penalty Analysis} \label{avg_pnt_anls}
    We analyze the fundamental performance trade-off of the system from an oracle-based perspective. We consider an oracle that possesses complete knowledge of the statistical properties of all random processes in the system (e.g., the probability distributions of channel conditions). Based on this statistical information, an optimal scheduling algorithm can be derived. 
    We focus on a single region and omit the subscript $a$ for simplicity. 
    To analyze long-term penalty analysis, we define a cumulative penalty function $F(h,\bm{b})$ as the sum of instantaneous penalties up to an AoI of $h$:
    \begin{equation} \label{eq:F_def}
        F(h,\bm{b}) \triangleq \sum_{x=0}^{h} f(x,\bm{b}).
    \end{equation}
    
    Since the instantaneous penalty function $f(h,\bm{b})$ is only defined over a discrete domain, we construct its continuous interpolation, denoted as $\Tilde{f}(x,\bm{b})$, such that $\Tilde{f}(x,\bm{b})=f(x,\bm{b})$ for all integer values of $x$. Given that $f(x,\bm{b})$ is non-negative and $\Tilde{f}(x,\bm{b})$ is non-decreasing w.r.t. $x$, the integral is bounded above by the right Riemann sum. Thus, we have:
    \begin{equation}
        \sum_{x=0}^{h} f(x,\bm{b}) \ge \int_0^h \Tilde{f}(x,\bm{b})dx.
    \end{equation}
    Denoting the continuous cumulative penalty as $\Tilde{F}(h,\bm{b})=\int_0^h \Tilde{f}(x,\bm{b})dx$, we thus have $F(h,\bm{b}) \ge \Tilde{F}(h,\bm{b})$.

    We consider a scheduling policy $\pi$ that admits a stationary regime. Under this policy, let $h$ and $d$ be the random variables for the AoI at the scheduling time and the corresponding task delay, with averages $\bar{h} = \mathbb{E}[h]$ and $\bar{d} = \mathbb{E}[d]$. 
    The following lemma first provides an equivalent long-term average penalty of $\mathcal{P}1$. It then establishes a tractable approximation, derived by approximating the staircase penalty and applying Jensen's inequality, which serves as our objective for subsequent optimization. An interval is defined as the duration between the completion time of two consecutive tasks, as illustrated in Fig.~\ref{aoi_penalty}. Let $M_K$ be the number of intervals up to slot $K$.
    \begin{lemma} \label{lem1}
    Given scheduling policy $\pi$, the long-term average penalty for a single region is:
    \begin{align}
    & \limsup_{K\to\infty} \frac{1}{K} \mathbb{E}_\pi \left[ \sum_{k=0}^{K} f(h_k, \bm{b}_k) \right] \\
    &= \limsup_{K\to\infty} \frac{M_K}{K} \mathbb{E}_\pi [F(h+d,\bm{b}) - F(d,\bm{b})] \label{eq:longterm_penaty}\\
    &\geq \frac{1}{\bar{h}} \cdot \left[ \Tilde{F}(\bar{h}+\bar{d}, \bm{b}) - \mathbb{E}_\pi[F(d,\bm{b})] \right].
    \label{eq:final_bound}
    \end{align}
    \end{lemma}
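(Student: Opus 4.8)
The plan is to read the left-hand side as a \emph{renewal-reward} process whose renewal epochs are the task completions $\{k_m'\}$, to identify the reward collected on each inter-completion interval in closed form through the cumulative penalty $F$, and finally to relax the resulting staircase sum to its continuous counterpart $\tilde F$ so that convexity and Jensen's inequality apply. I fix a region, drop the subscript $a$, and work in the stationary regime admitted by $\pi$; I write $h_m$ for the age at the $m$-th scheduling instant and $d_m$ for the $m$-th task delay, so that $k_m' = k_m + d_m$.

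For the identity \eqref{eq:longterm_penaty}, I would first decompose the horizon into the $M_K$ intervals delimited by consecutive completions. By the age dynamics \eqref{eq:aoi}, on the $m$-th interval the age is reset to its delay value at $k_m'$ and then increases by one each slot up to its peak just before the next completion, while the region carries the communication volume $\bm b_{k_m}$ fixed at the $m$-th scheduling. Hence the penalties accrued on this interval are evaluated at consecutive integer ages, and summing them \emph{telescopes} through the definition \eqref{eq:F_def} of $F$ into a single difference of $F$ at the peak age and at the base age. Identifying the peak age with $h+d$ and the base age with $d$ in the stationary labelling, the per-interval reward equals $F(h+d,\bm b) - F(d,\bm b)$. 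Writing the time average as the interval rate $M_K/K$ times the empirical mean of these rewards, and invoking the ergodic (renewal-reward) convergence of that mean to its stationary expectation $\mathbb E_\pi[F(h+d,\bm b)-F(d,\bm b)]$, then yields \eqref{eq:longterm_penaty}.

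For the bound \eqref{eq:final_bound} I would treat the two factors separately. The interval rate converges as $M_K/K \to 1/\bar h$: the length of a cycle is essentially the number of integer ages it visits, i.e.\ the peak age minus the base age, and in stationarity the delay contributions cancel so that the expected cycle length collapses to $\bar h$. For the reward I would apply the already-established staircase bound $F(h+d,\bm b)\ge \tilde F(h+d,\bm b)$ to the positive term only, keeping $\mathbb E_\pi[F(d,\bm b)]$ exact. Since $\tilde f(\cdot,\bm b)$ is non-decreasing, its integral $\tilde F(\cdot,\bm b)$ is convex, so Jensen's inequality gives $\mathbb E_\pi[\tilde F(h+d,\bm b)]\ge \tilde F(\mathbb E_\pi[h+d],\bm b)=\tilde F(\bar h+\bar d,\bm b)$. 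Combining, $\mathbb E_\pi[F(h+d,\bm b)-F(d,\bm b)]\ge \tilde F(\bar h+\bar d,\bm b)-\mathbb E_\pi[F(d,\bm b)]$, and multiplying by the limiting rate $1/\bar h$ produces \eqref{eq:final_bound}.

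The main obstacle is the renewal bookkeeping of the first part, not the Jensen step. Making the per-interval telescoping exact requires pinning down the endpoint convention — which delay resets the age at the start of a cycle versus which age-at-scheduling determines its peak — so that both the reward collapses cleanly to $F(h+d,\bm b)-F(d,\bm b)$ and the expected cycle length collapses to $\bar h$, with the delay terms cancelling consistently in stationarity. A secondary technical point is justifying the interchange of the $\limsup$, the expectation, and the empirical interval average, which needs the stationarity of $\pi$ together with a uniform-integrability (or bounded-penalty) condition so that the renewal-reward limit applies.
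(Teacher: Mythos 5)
Your proposal follows essentially the same route as the paper: decompose the horizon into completion-to-completion renewal intervals, telescope the per-interval penalty through the definition of $F$ into $F(h+d,\bm b)-F(d,\bm b)$, apply the renewal-reward limit to $M_K/K$, and then combine the staircase bound $F\ge\tilde F$ with convexity of $\tilde F$ and Jensen's inequality. The endpoint bookkeeping you flag as the main obstacle is real but is precisely where the paper itself is loose: its appendix computes the expected cycle length as $\bar h+1$ and lands on $\bigl[\tilde F(\bar h+\bar d+1,\bm b)-\mathbb E_\pi[F(d,\bm b)]\bigr]/(\bar h+1)$, which differs from the lemma's stated $\bar h$ and $\tilde F(\bar h+\bar d,\bm b)$ by exactly the off-by-one convention you identify.
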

    \begin{proof}
    See Appendix~\ref{sec:appendix_a}.
    \end{proof}
    While Lemma \ref{lem1} provides a tractable approximation for long-term average penalty, our online algorithm requires \emph{practical, per-slot} decisions. This decision process is decoupled into two steps.
    In the first step, the BS assumes a schedule will occur in slot $k$ and determines \emph{the optimal communication volume} $\bm{b}_k^*$ for the current AoI $h_k$ and channel condition. This is obtained by solving the following per-slot penalty minimization problem guided by \eqref{eq:final_bound}:
    \begin{equation}
        \bm{b}_k^* = \arg\min_{\bm{b} \geq 0} \; \frac{1}{h_k} \left[ \Tilde{F}(h_k + d_k, \bm{b}) - F(d_k, \bm{b}) \right]. \label{eq:comm_decision}
    \end{equation}
    In the second step, given the candidate volume $\bm{b}_k^*$, the BS must decide \emph{whether to schedule} the region in slot $k$.
    To establish a criterion for this decision, we analyze the properties of \eqref{eq:final_bound} to characterize the attributes of an optimal scheduling instant. 
    Specifically, to find the long-term optimal scheduling threshold, for the candidate communication volume $\bm{b}$, we optimize the AoI $h$ using the average delay $\bar{d}$:
    \begin{equation}
        \mathcal{P}2: \quad \min_{h > 0} \quad \frac{1}{h} \left( \Tilde{F}(h+\bar{d}, \bm{b}) - \mathbb{E}[F(d,\bm{b})] \right)
    \end{equation}
    \begin{remark}
        The first step represents an instantaneous optimal decision based on the current state $h_k$ and $d_k$, while the second step is formulated to derive the scheduling guidance for the long-term strategy, thus the average value $\bar{d}$ is employed to characterize the policy attribute.
    \end{remark}
    \begin{lemma} \label{lem:convex}
    $\mathcal{P}2$ is a quasi-convex optimization problem.
    \end{lemma}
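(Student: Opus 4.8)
My plan is to fix the communication volume $\bm{b}$, regard the objective of $\mathcal{P}2$ as a function of the single scalar variable $h>0$,
\begin{equation*}
g(h) \triangleq \frac{1}{h}\left( \Tilde{F}(h+\bar{d}, \bm{b}) - \mathbb{E}[F(d,\bm{b})] \right),
\end{equation*}
and establish quasi-convexity directly from its definition, namely by showing that every sublevel set $S_\alpha \triangleq \{\,h>0 : g(h) \le \alpha\,\}$ is a convex set. The first thing I would record is that $\mathbb{E}[F(d,\bm{b})]$ is taken over the delay distribution and carries no dependence on $h$; it is therefore a constant in this problem, which I denote by $C$. This observation is what makes the denominator-clearing argument below go through cleanly.

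The key step is to eliminate the $1/h$ factor. Because the feasible region is restricted to $h>0$, multiplying the defining inequality $g(h)\le\alpha$ by $h$ preserves its direction, giving the equivalence
\begin{equation*}
g(h)\le\alpha \iff \Tilde{F}(h+\bar{d}, \bm{b}) - \alpha h - C \le 0.
\end{equation*}
I then define $\psi_\alpha(h) \triangleq \Tilde{F}(h+\bar{d}, \bm{b}) - \alpha h - C$ and argue that $\psi_\alpha$ is convex in $h$ for every $\alpha$. Indeed, by construction $\Tilde{F}(\cdot,\bm{b})=\int_0^{(\cdot)}\Tilde{f}(x,\bm{b})\,dx$ is the integral of the function $\Tilde{f}(\cdot,\bm{b})$, which is non-decreasing in $x$ by the assumed monotonicity of the penalty; the integral of a non-decreasing function has non-decreasing slope and is thus convex. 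Convexity is preserved under the affine inner substitution $h\mapsto h+\bar{d}$ and under subtracting the affine term $\alpha h$ together with the constant $C$, so $\psi_\alpha$ is convex.

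Finally, $S_\alpha$ is exactly the intersection of the $0$-sublevel set of the convex function $\psi_\alpha$ with the convex half-line $(0,\infty)$, and hence is convex (an interval, possibly empty). Since this holds for every threshold $\alpha$, the sublevel sets of $g$ are all convex, so $g$ is quasi-convex and $\mathcal{P}2$ is a quasi-convex problem. I expect the only delicate points to be the two places where the structural hypotheses enter: justifying convexity of $\Tilde{F}(\cdot,\bm{b})$ from the mere monotonicity of $\Tilde{f}$ (so that the argument does not secretly assume differentiability or a second-order condition), and emphasizing that the multiplication by $h$ that clears the denominator is legitimate precisely because the feasible set excludes $h=0$, so no sign reversal occurs. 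Neither is a genuine obstacle, but both should be stated explicitly to make the reduction airtight.
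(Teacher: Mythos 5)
Your proposal is correct and follows essentially the same route as the paper's proof: clear the $1/h$ denominator over $h>0$, observe that $\Tilde{F}(h+\bar{d},\bm{b})-\alpha h$ is convex because $\Tilde{F}$ is the integral of the non-decreasing $\Tilde{f}$, and conclude that each sublevel set is the (convex) sublevel set of a convex function. Your only refinement is deriving convexity of $\Tilde{F}$ directly from monotonicity of $\Tilde{f}$ rather than via the second-derivative condition $\Tilde{F}''=\Tilde{f}'\ge 0$ that the paper uses, which avoids an implicit differentiability assumption but does not change the argument.
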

    \begin{proof}
    See Appendix~\ref{sec:appendix_b}.
    \end{proof}
    
    For a quasi-convex problem, the Karush-Kuhn-Tucker (KKT) condition is sufficient for global optimality.
    By analyzing the KKT condition of $\mathcal{P}2$, we derive a metric that captures the scheduling urgency. 
    The stationarity condition requires that the derivative of the objective function in $\mathcal{P}2$ with respect to $h$ must be zero:
    \begin{equation}
        \frac{1}{h^2} \cdot \left[ h\Tilde{f}(h+\bar{d}, \bm{b}) - (\Tilde{F}(h+\bar{d}, \bm{b}) - \mathbb{E}[F(d,\bm{b})]) \right] = 0. \label{eq:kkt}
    \end{equation}
    
    Motivated by the optimality condition, we define an index $U(h,b)$ that quantifies the utility of scheduling, defined as the expression on left-hand side of \eqref{eq:kkt}:
    \begin{equation}
        \!U(h, \!\bm{b}) \! \triangleq  \!\frac{1}{h^2} \!\left[ h \Tilde{f}(h+\bar{d}, \!\bm{b})\!-\! \left( \!\Tilde{F}(h+\bar{d}, \!\bm{b}) \! - \! \mathbb{E}[F(d,\!\bm{b})] \!\right) \!\right]\!\!. \label{eq:utility_index}
    \end{equation}
    For typical penalty functions, $U(h, \bm{b})$ is a non-decreasing function of the AoI $h$, which aligns with the intuition that the urgency to schedule an update should increase as information becomes more stale. This index also reveals the fundamental trade-off with respect to the communication volume $\bm{b}$, as an increase in $\bm{b}$ reduces the penalty $\Tilde{f}(h,\bm{b})$ but increases the delay $\bar{d}$.
    This utility index forms the basis of our priority metric for the multi-region scheduling algorithm.

    \begin{algorithm}[!t]
    \caption{The TAMP Scheduling Algorithm}
    \label{alg:tamp}
    \begin{algorithmic}[1]
        \State \textbf{Input:} Maximum scheduled regions $M$, bandwidth $B_{\text{total}}$, communication budget $\bm{\Gamma}$, control parameter $\bm{V}$.
        \State \textbf{Initialize:} $Q_{a,0} \gets 0$, $h_{a,0} \gets 1$, $\mathcal{A}_{\text{idle},0} \gets \mathcal{A}$.
            \For{each slot $k=1, 2, \dots$}
            \State Initialize $\mathcal{C} \gets \emptyset$.
            \Comment{Candidate Evaluation}
            \ForAll{idle region $a \in \mathcal{A}_{\text{idle},k}$}
                \State Calculate optimal volume $b_{a,k}^*$ using \eqref{eq:comm_decision}.
                \State $\Pi_{a,k} \gets U_a(h_{a,k}, b_{a,k}^*) - V_a Q_{a,k} b_{a,k}^*$.
                \If{$\Pi_{a,k} \ge 0$}
                    \State Add $(a, \Pi_{a,k})$ to $\mathcal{C}$.
                \EndIf
            \EndFor
            \State $M_{\text{rem}} \gets M - |\mathcal{A} \setminus \mathcal{A}_{\text{idle},k}|$.
            \Comment{Competitive Selection}
            \State Sort $\mathcal{C}$ by $\Pi_{a,k}$ in descending order.
            \State $\mathcal{A}_{\text{selected},k} \gets \text{Top}(\mathcal{C}, \min(M_{\text{rem}},|\mathcal{C}|))$.
            \State $\mathcal{A}_{k} \gets (\mathcal{A} \setminus \mathcal{A}_{\text{idle},k}) \cup \mathcal{A}_{\text{selected},k}$.    
             \ForAll{region $a \in \mathcal{A}$}
                \Comment{Resource Allocation}
                \If {$a \in \mathcal{A}_{\text{selected}, k}$}
                    \State $u_{a,k} \gets 1$, $b_{a,k} \gets b_{a,k}^*$.
                    \State $\{b_{n,k} \mid n \in \mathcal{N}_a\} \gets \text{Split}(b_{a,k})$.
                \Else
                    \State $u_{a,k} \gets 0$, $b_{a,k} \gets 0$.
                \EndIf
                \If{$a \in \mathcal{A}_{k}$}
                    \State $B_{a,k} \gets B_{\text{total}} / M$.
                \EndIf
            \EndFor
            \ForAll{region $a \in \mathcal{A}$}
            \Comment{System State Update}
                \State Update virtual queue $Q_{a,k+1}$ using \eqref{eq:virtual_queue}.
                \State Update AoI $h_{a,k+1}$ using \eqref{eq:aoi}.
            \EndFor
            \State Update the idle region set $\mathcal{A}_{\text{idle},k+1}$.
        \EndFor
    \end{algorithmic}
    \end{algorithm}  
    
    \subsection{Scheduling Priority} \label{sche_prio}
    The previous subsection introduced  a scheduling utility index, $U_a(h_a,\bm{b}_a)$, quantifies scheduling utility of each region. We now develop a metric balancing this utility against the long-term communication constraint.
    Empirical analysis in Section \ref{empirical} under the Where2comm framework \cite{hu2022where2comm} shows that CP performance is sensitive to the total allocated communication volume of each region, which it dynamically distributes communication volume among sensors according to the importance of their fields of view.
    Accordingly, we assign a maximum allowed long-term average communication volume $\Gamma_a$ to each region $a \in \mathcal{A}$. 
    To meet this budget, we introduce a virtual queue, $Q_{a,k}$ for each region. This queue tracks the ``communication deficit'' for region $a$ and evolves as follows:
    \begin{equation}
        Q_{a, k+1} = \max \left\{ Q_{a,k} + b_{a,k} - \Gamma_a, 0 \right\}, \label{eq:virtual_queue}
    \end{equation}
    where $b_{a,k}$ is the communication volume allocated to region $a$ in slot $k$. Note that $b_{a,k}=0$ if region $a$ is not scheduled in slot $k$.
    Intuitively, if the communication usage $b_{a,k}$ exceeds the budget $\Gamma_a$, the queue $Q_{a,k}$ grows, signaling the system is over-budget. A large $Q_{a,k}$ indicates a strong need to conserve communication resources in subsequent slots. A stable virtual queue enforces the long-term communication budget $\Gamma_a$.

    Our proposed online strategy is based on the framework of Lyapunov optimization, and the core is the drift-plus-penalty principle \cite{neely2010stochastic}. The goal in each slot is to maximize scheduling utility while pushing the virtual queue towards zero.
    Define the quadratic Lyapunov function $L(Q_{a,k})=\frac{1}{2}Q_{a,k}^2$, and the one-slot conditional drift is:
    \begin{equation}
        \Delta(Q_{a,k}) = \mathbb{E}[L(Q_{a,k+1})-L(Q_{a,k})\mid Q_{a,k}].
    \end{equation}
    The drift-plus-penalty expression is:
    \begin{equation}
        \Delta(Q_{a,k}) - V_a \, \mathbb{E}[ U_a(h_{a,k}, b_{a,k}^*) \, u_{a,k} \mid Q_{a,k} ], 
    \end{equation}
    where $V_a$ is a non-negative parameter controlling the drift and penalty trade-off. 
    To ensure tractability, we minimize an upper bound on the drift-plus-penalty expression. This transforms the long-term problem into a deterministic, per-slot problem, as formalized in the following Theorem~\ref{theorem:dpp}.
    \begin{theorem} \label{theorem:dpp}
    By minimizing an upper bound on the drift-plus-penalty expression, the scheduling algorithm is transformed into a deterministic per-slot problem, equivalent to selecting the scheduling action $u_{a,k} \in \{0,1\}$ that solves the following maximization problem in each slot $k$:
    \begin{equation}
         \max_{u_{a,k} \in \{0,1\}} \quad \left[ U_a(h_{a,k}, b_{a,k}^*) - V_a \cdot Q_{a,k} \cdot b_{a,k}^* \right] u_{a,k}.
        \label{eq:per_slot_objective}
    \end{equation}
    \end{theorem}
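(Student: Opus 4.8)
The plan is to follow the standard drift-plus-penalty recipe of Lyapunov optimization, specialized to the virtual queue \eqref{eq:virtual_queue} and the utility index $U_a$. The whole argument reduces to three moves: (i) upper-bound the one-slot conditional Lyapunov drift $\Delta(Q_{a,k})$ by an affine function of $Q_{a,k}$ plus a constant; (ii) append the scaled, negated utility term to form the drift-plus-penalty bound; and (iii) discard every term that does not depend on the decision $u_{a,k}$, so that minimizing the bound collapses to the per-slot maximization \eqref{eq:per_slot_objective}.

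First I would bound the drift. Starting from $L(Q_{a,k+1})-L(Q_{a,k})=\tfrac{1}{2}\big(Q_{a,k+1}^{2}-Q_{a,k}^{2}\big)$ and applying the projection inequality $(\max\{x,0\})^{2}\le x^{2}$ to the queue update \eqref{eq:virtual_queue}, I drop the $\max\{\cdot,0\}$ operator and expand the square to get $L(Q_{a,k+1})-L(Q_{a,k})\le Q_{a,k}(b_{a,k}-\Gamma_a)+\tfrac{1}{2}(b_{a,k}-\Gamma_a)^{2}$. Because each $b_{n,k}$, and hence the regional volume $b_{a,k}=\sum_{n\in\mathcal{N}_a}b_{n,k}$, lies in the bounded interval implied by \eqref{con_var}, the quadratic remainder is uniformly bounded by a finite constant $C_a$. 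Taking the conditional expectation given $Q_{a,k}$ then yields the affine drift bound $\Delta(Q_{a,k})\le C_a+Q_{a,k}\big(\mathbb{E}[b_{a,k}\mid Q_{a,k}]-\Gamma_a\big)$.

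Next I would assemble the drift-plus-penalty bound and isolate its controllable part. Adding the penalty term and invoking the structural identity $b_{a,k}=b_{a,k}^{*}\,u_{a,k}$ --- a region consumes its allocated volume precisely when it is scheduled and zero otherwise --- I regroup each summand according to whether it depends on $u_{a,k}$. The constant $C_a$ and the term $-Q_{a,k}\Gamma_a$ are independent of the scheduling action and may be dropped without changing the minimizer, leaving a single bracketed expression multiplying $u_{a,k}$. Minimizing this residual over $u_{a,k}\in\{0,1\}$ is equivalent to maximizing $\big(V_a U_a(h_{a,k},b_{a,k}^{*})-Q_{a,k}b_{a,k}^{*}\big)u_{a,k}$; since the binary argmax is invariant under scaling the objective by the positive constant $1/V_a$, and $V_a$ is a free control parameter that may be relabeled, this is exactly the worth $\big[U_a(h_{a,k},b_{a,k}^{*})-V_a Q_{a,k}b_{a,k}^{*}\big]u_{a,k}$ of \eqref{eq:per_slot_objective}.

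The step I expect to be the main obstacle is justifying the final reduction from a minimization-in-expectation to a pointwise per-slot rule, i.e., the opportunistic-minimization principle: minimizing the conditional expectation of the drift-plus-penalty bound is achieved by minimizing, in each slot, the realized bracketed quantity that is deterministic given the observed state. This is legitimate because $u_{a,k}$ is measurable with respect to the current state $(Q_{a,k},h_{a,k})$, so the greedy per-slot choice that minimizes the bound sample-path-wise also minimizes its expectation. A secondary care point is the bounded-volume assumption from \eqref{con_var}, which is precisely what guarantees $C_a<\infty$; without a finite $C_a$ the affine drift bound, and therefore the entire reduction, would break down.
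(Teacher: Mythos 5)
Your proposal is correct and follows essentially the same route as the paper's Appendix C: the same quadratic Lyapunov function, the same projection inequality $(\max\{x,0\})^2 \le x^2$, the same substitution $b_{a,k}=u_{a,k}b_{a,k}^*$, bounding the quadratic remainder by a constant, and dropping the action-independent terms. Your two added remarks---that the derivation naturally yields $\bigl(V_a U_a - Q_{a,k}b_{a,k}^*\bigr)u_{a,k}$ and reaching the stated form \eqref{eq:per_slot_objective} requires rescaling/relabeling the positive constant $V_a$, and that the reduction from expectation to a pointwise rule rests on opportunistic minimization---are both points the paper's proof glosses over, and they are handled correctly.
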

    \begin{proof}
    See Appendix~\ref{sec:appendix_c}.
    \end{proof}
    The objective \eqref{eq:per_slot_objective} inspires our final scheduling priority score $\Pi_{a,k}$, for each region $a$ at slot $k$:
    \begin{equation}
    \Pi_{a,k} = U_a(h_{a,k}, b_{a,k}^*) - V_a \cdot Q_{a,k} \cdot b_{a,k}^*.
    \label{eq:final_priority_score}
    \end{equation}
    Here, $b_{a,k}^*$ is the optimal communication volume for region $a$ at slot $k$, determined by solving \eqref{eq:comm_decision}.
    \begin{remark}
    The priority score $\Pi_{a,k}$ elegantly trades off scheduling utility and communication cost.
    The first term, $U_a(\cdot)$, represents the \emph{scheduling worth}, derived from our penalty analysis.
    The second term, $V_a Q_{a,k} b_{a,k}^*$, represents the \emph{scheduling cost}, penalizing excess communication cost.
    \end{remark}
    
    \subsection{Timeliness-Aware Multi-region Prioritized Scheduling} \label{TAMP_alg}
    Building upon the scheduling priority score $\Pi_{a,k}$, we now present our timeliness-aware multi-region prioritized (TAMP) scheduling algorithm for CP. 
    The core idea is a competitive selection process: in each slot, all idle regions are evaluated for scheduling priority.
    The TAMP algorithm then schedules the most deserving regions such that the total number of active regions does not exceed the system capacity $M$.
    The complete procedure is formalized in Algorithm~\ref{alg:tamp}.

    Given the maximum scheduled regions $M$, the total bandwidth $B_{\text{total}}$, the communication budget $\bm{\Gamma}= \left\{\Gamma_a \mid a \in \mathcal{A}\right\}$ and the control parameter $\bm{V}= \left\{V_a \mid a \in \mathcal{A}\right\}$, the algorithm unfolds in four stages within each slot:

    \emph{1) Candidate Evaluation:} The BS assesses each idle region by calculating its optimal communication volume $b_{a,k}^*$ and a corresponding priority score $\Pi_{a,k}$. Regions with a non-negative score are added to the candidate set $\mathcal{C}$.

    \emph{2) Competitive Selection:} These candidates are ranked in descending order by priority scores. The available scheduling capacity, denoted as $M_{\text{rem}}$, is calculated as the BS capability $M$ minus the number of currently active regions, $|\mathcal{A} \setminus \mathcal{A}_{\text{idle},k}|$. The BS then selects the top $\min(M_{\text{rem}}, |\mathcal{C}|)$ regions to initiate new tasks, merging them with existing active regions to form the final active set $\mathcal{A}_k$.

    \emph{3) Resource Allocation:} For the newly selected regions $a \in \mathcal{A}_{\text{selected},k}$, the scheduling variable $u_{a,k}$ is set to 1. The assigned regional volume $b_{a,k}$ is distributed to sensors using the $\text{Split}(\cdot)$ function, illustrated in Fig.~\ref{workflow}. This function utilizes a spatial confidence mask \cite{hu2022where2comm} and dynamically adjusts a confidence threshold $\theta$ via binary search, thereby assigning communication volumes to capture the most salient object features from view of each sensor.
    Specifically, $\theta$ is iteratively decreased (to include more features) or increased (to compress data) until the sum of the intermediate sensor volumes, $\sum_{n \in \mathcal{N}_a} b_n'$, aligns with the budget $b_{a,k}$ within a tolerance $\xi$. Subsequently, each region in the final active set $\mathcal{A}_k$ is allocated an equal share of the total bandwidth, $B_{\text{total}}/M$. 

    \emph{4) System State Update:} Finally, the AoI and virtual queues for all regions are updated according to their dynamics. Any region that has just completed its task is returned to the idle set $\mathcal{A}_{\text{idle}}$ for the next slot.
    
    \begin{figure}[!t]
    \centering
    \includegraphics[width=0.48\textwidth]{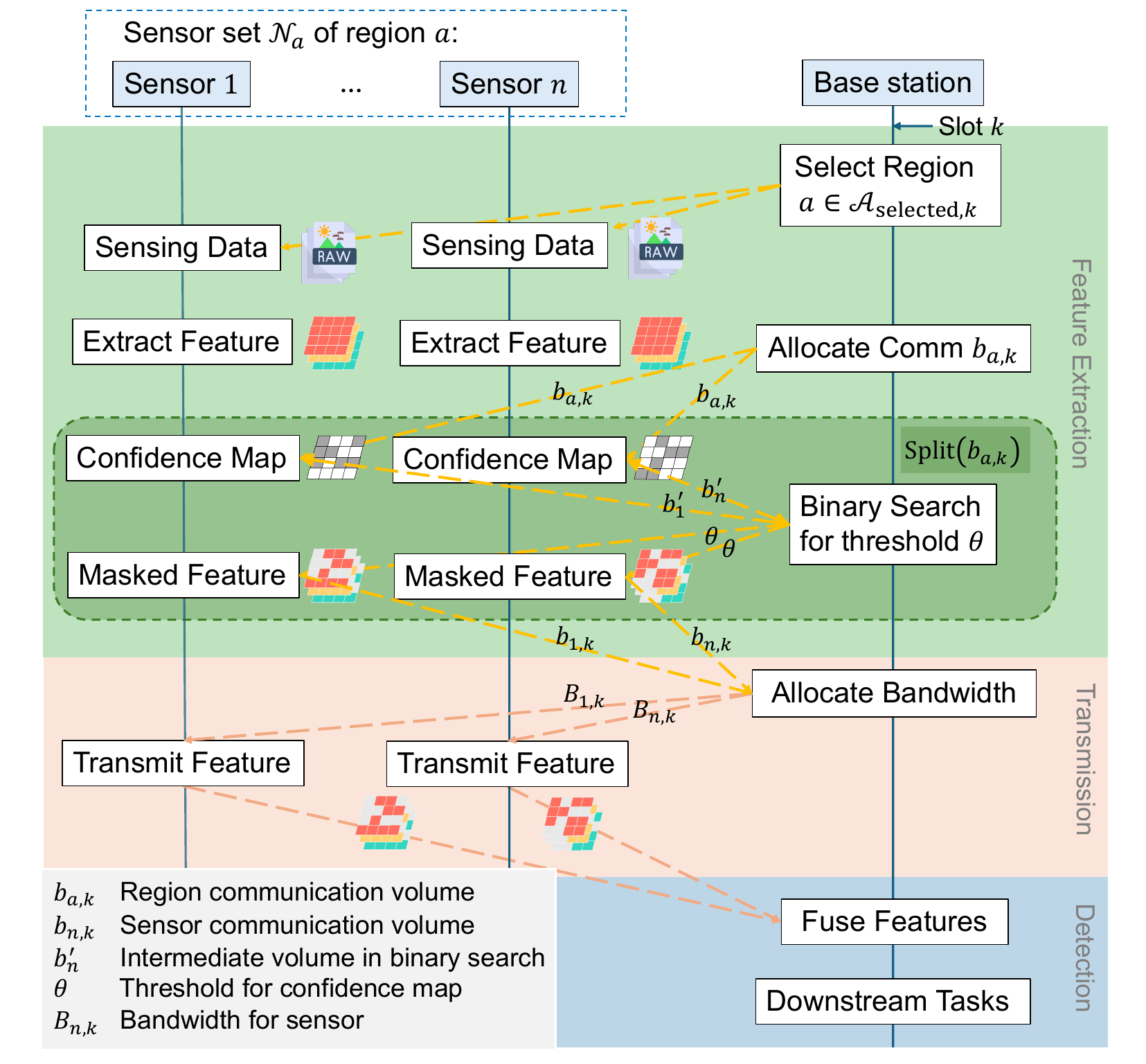}
    \caption{The workflow of the multi-region CP system.}
    \label{workflow}
    \end{figure}
    
    The TAMP scheduling algorithm, is executed within the three-phase workflow mentioned in Section \ref{system model}, illustrated in Fig.~\ref{workflow}. The four stages of algorithm map to the physical system as follows:
        \begin{itemize}
            \item Feature Extraction: The BS executes \emph{candidate evaluation} and \emph{competitive selection}, determining the selected regions $\mathcal{A}_{\text{selected},k}$ in slot $k$.
            For the first part of \textit{resource allocation}, the BS allocates communication volume $b_{a,k}^*$ to regions. Then, sensors generate the compressed feature with volume $b_{n,k}$ based on their spatial confidence maps.

            \item Feature Transmission: For the second part of \textit{resource allocation}, the BS allocates bandwidth $B_{a,k}$ to active regions $\mathcal{A}_k$, and $B_{n,k}$ to each sensor to equalize the transmission delay among all sensors within the region. Then sensors transmit compressed features to the BS.
            
            \item Feature Fusion and Detection: The BS fuses the received features and conducts downstream tasks. Then the BS executes \textit{system state update}, refreshing AoI, virtual queues and the set of idle regions after task completion.
        \end{itemize}
    
    \section{Empirical Penalty Function} \label{empirical}
    In this section, we establish an empirical penalty function based on the RCooper dataset to bridge the theoretical scheduling framework with real-world CP performance. We model the relationships between Average Precision (AP), AoI, and communication volume in different scenarios, deriving utility metrics that guide the TAMP algorithm.
    
    \subsection{Experimental Setup}
    To empirically model the penalty function, we conducted experiments on the real-world RCooper dataset \cite{hao2024rcooper}, which features two roadside CP scenarios: intersection and corridor, as shown in Fig.~\ref{rcooper_scenario}. Each intersection is equipped with four sensors, consisting of two multiline LiDAR groups (one 80-beam and one 32-beam) and two MEMS LiDARs. 
    Each corridor is equipped with two multiline LiDAR groups (one 80-beam and one 32-beam).
    The dataset includes 246 sequences from corridor scenarios and 34 sequences from intersection scenarios. Each 15-second sequence is captured at 3 Hz, comprising approximately 45 point cloud frames.
    Our methodology is based on the Where2Comm CP framework \cite{hu2022where2comm}. We evaluate object detection performance using Average Precision (AP), where detections are matched to ground-truth objects based on the Intersection over Union (IoU). We evaluate AP at IoU thresholds of 0.3, 0.5, and 0.7.
    We systematically control the two variables that influence the penalty function:
    \begin{itemize}
        \item AoI: We simulate information staleness by introducing a temporal offset between a point cloud frame and its ground-truth label. For instance, an AoI of 0.2s is created by evaluating a  detection results frame against the labels from 0.2s prior.
        \item Communication Volume: We regulate the data volume using spatial-confidence-aware communication mechanism \cite{hu2022where2comm}. A spatial confidence map is generated, and by applying a threshold to this map, only features from high-confidence areas (e.g., those likely to contain objects) are selected for transmission, allowing us to control the total communication volume of each region.
    \end{itemize}
    \begin{figure}[!t]
    \centering
    \subfloat[Intersection Scenario]{
        \includegraphics[width=0.455\columnwidth]{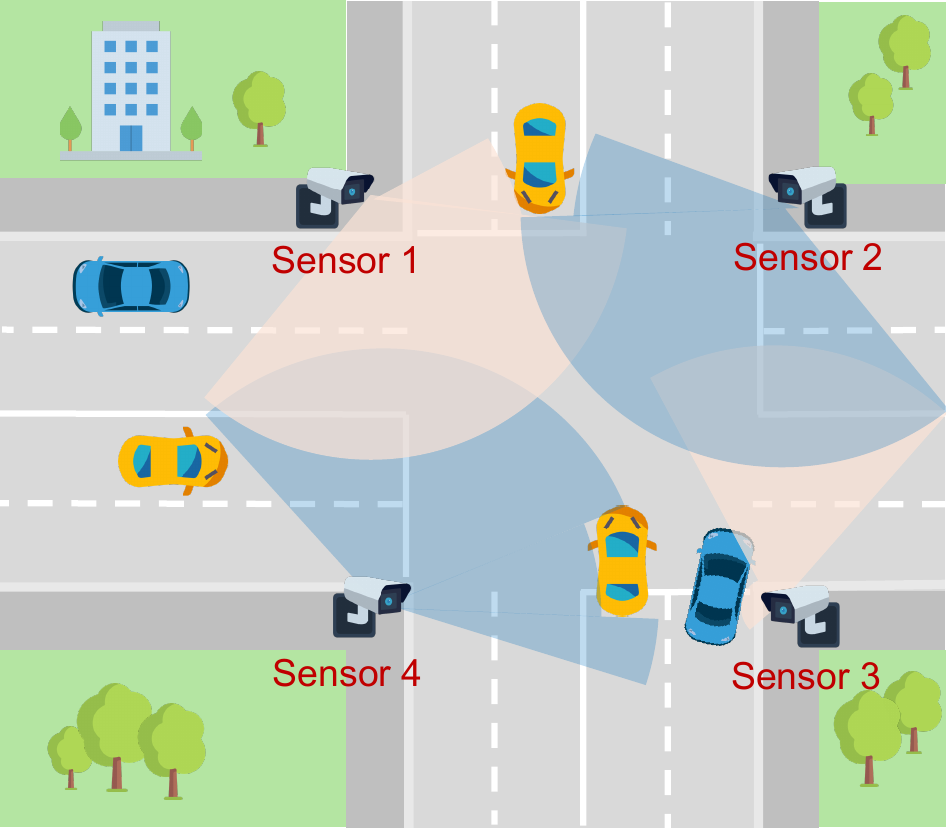}
        \label{fig:intersection}
    }
    \hfill
    \subfloat[Corridor Scenario]{
        \includegraphics[width=0.455\columnwidth]{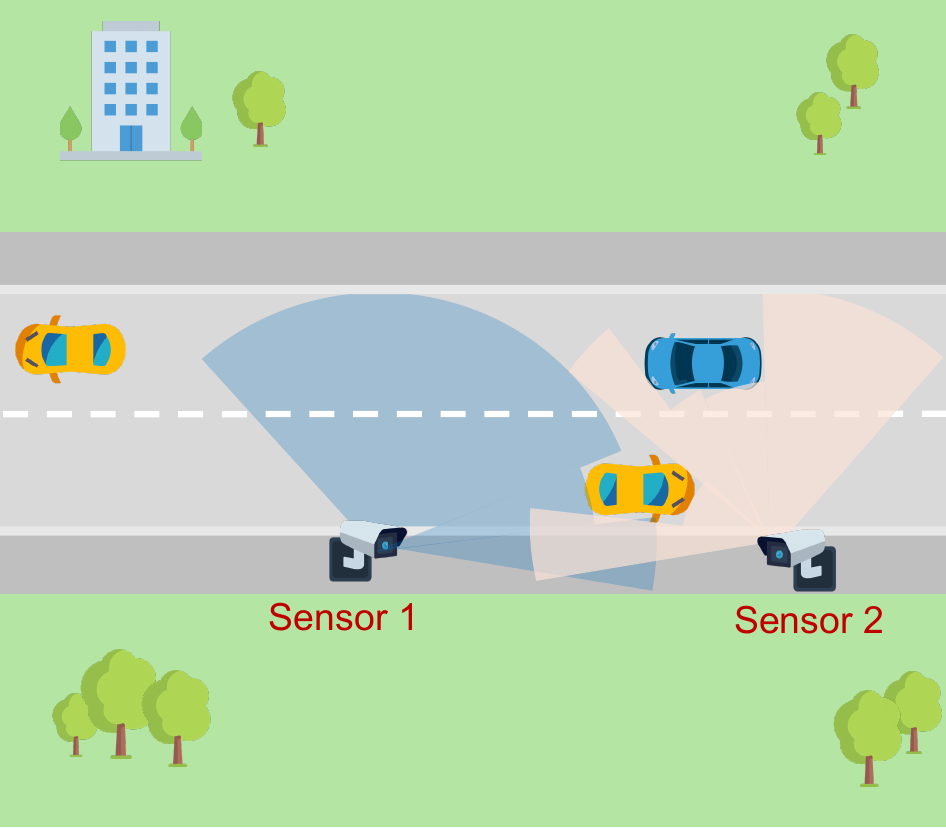}
        \label{fig:corridor}
    }
    \caption{Two typical roadside CP scenarios of RCooper dataset\cite{hao2024rcooper}.}
    \label{rcooper_scenario}
    \end{figure}

    \subsection{Performance Modeling and Penalty Function}
    Experimental data in Fig.~\ref{fig:intersection_aoi_comm} visualizes the joint influence of AoI and communication volume on AP in the intersection scenario.
    The AP initially degrades as the AoI increases. Subsequently, the degradation gradually flattens, approaching a stable AP value. This stable baseline is interpreted as the detection accuracy for static background objects, which remains largely unaffected by information staleness.
    Separately, additional communication volume often provides a sharp initial performance gain by revealing critical occluded objects, but this gain gradually levels off.
    The combined influence is modeled by the dual exponential model, and the AP, $\rho_1$, is given by the fitted surface in Fig.~\ref{fig:intersection_aoi_comm} and expressed as:
    \begin{equation}
    \rho_1(h_\text{s},b_\text{log}) = \alpha \cdot e^{-\beta \cdot h_\text{s}} - \gamma \cdot e^{-\delta \cdot b_\text{log}} + \epsilon,
    \label{eq:int_dual_exp_math}
    \end{equation}
    where $h_\text{s}$ is AoI in seconds, $b_\text{log}$ is the total communication volume of sensors in $\log_2(\text{Bytes})$, and $\alpha, \beta, \gamma, \delta$ and $\epsilon$ are non-negative model parameters.

    \begin{figure}[!t]
    \centering
    \includegraphics[width=0.48\textwidth]{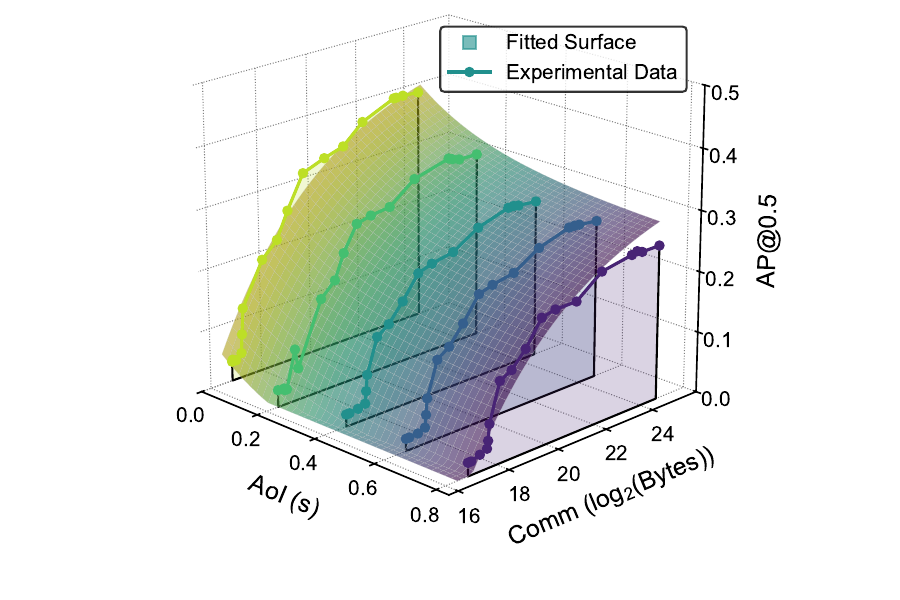}
    \vspace{-6mm}
    \caption{Performance fitting in the \textit{intersection} scenario, which depicts the joint impact of AoI and communication volume (in $\log_2(\text{Bytes})$) on AP@0.5.}
    \label{fig:intersection_aoi_comm}
    \end{figure}
    
    We now define the continuous penalty function, $\Tilde{f}(h,b)$, used by our scheduling algorithm, which represents the loss in AP relative to an ideal baseline. 
    First, we define the ideal performance baseline, denoted as $\rho_\text{max}$. This value represents the maximum achievable AP at zero AoI and maximum communication volume, i.e., $\Tilde{f}(0,b_\text{max})=0$. The penalty function is formulated as the difference between the baseline and the performance modeled by our fitted empirical function:
    \begin{equation}
    \Tilde{f}(h, b) = \rho_\text{max} - \rho_1(h_\text{s},b_\text{log}).
    \label{penalty_int}
    \end{equation}
    The physical variables used in the fitted model are mapped from the decision variables in algorithm as follows:
    \begin{equation}
        h_\text{s} = h \cdot \tau,~~
        b_\text{log} = \log_2\left(\frac{b \cdot 10^6}{8}\right).
    \end{equation}
    Here, $\tau$ is the duration of a single slot, the AoI $h$ is measured in slots, and the communication volume $b$ is measured in Megabits (Mb). 
    Substituting the above variable transformation into \eqref{penalty_int} yields the penalty function:    
    \begin{equation}
        \Tilde{f}(h, b) = -\alpha \cdot e^{\beta h \tau} + \gamma \left( \frac{b\cdot 10^6}{8}\right)^{-\delta/\ln2}-\epsilon_0,
    \end{equation}
    where $\epsilon_0=\rho_\text{max}-\epsilon$. Furthermore, we have:
    \begin{equation}
        \Tilde{F}(h, b) 
        = -\frac{\alpha}{\beta \tau}(1-e^{-\beta \tau h})+h \! \cdot \! \left[\gamma \left( \frac{b\cdot 10^6}{8} \right)^{-\delta/\ln2} \!\!+\! \epsilon_0 \right]\!. \nonumber
    \end{equation}
    Applying this penalty function into the scheduling utility index $U(h,b)$ defined in Subsection \ref{avg_pnt_anls}, yields the utility metric used by our algorithm:
    \begin{equation}
        U(h,b)=-\frac{\alpha}{h}e^{-\beta h \tau}-\frac{\alpha}{\beta \tau h^2}e^{-\beta \tau \bar{d}}\left(e^{-\beta h \tau}-1\right),
    \end{equation}
    where $\bar{d}$ is the expected delay in this scenario.
    This derived expression directly guides the competitive scheduling decisions in the algorithm presented in Section~\ref{scheduling algorithm}.    

    In the corridor scenario, Fig.~\ref{fig:corridor_aoi_vs_ap} shows the AP degrades more rapidly with increasing AoI compared to the intersection scenario, which is due to the typically higher vehicle speeds in corridor scenario. This rapid performance degradation is mathematically captured using an exponential decay function.
    The experimental data in Fig.~\ref{fig:corridor_comm_vs_ap_aois} shows the influence of communication volume on AP under different AoIs.  
    We observe a saturation effect where performance improves rapidly with initial increases in communication volume but stabilizes once objects are clearly resolved. The Sigmoid function captures this relationship of diminishing returns.
    The AP, $\rho_2$, is modeled using a Sigmoid-Exponential decay function, and is given by:
    \begin{equation}
    \label{eq:cor_sig_exp_decay}
    \rho_2(h_\text{s},b_\text{log})) = \left(\kappa \cdot \frac{1}{1+e^{-\lambda(b_\text{log}-\lambda_0)}} - \mu\right) \cdot e^{-\nu \cdot h_\text{s}},
    \end{equation}
    where $\kappa,\lambda,\lambda_0,\nu$ and $\mu$ are non-negative model parameters. The fitted curves across various AoI levels are illustrated in Fig.~\ref{fig:corridor_comm_vs_ap_aois}. Defining the penalty function and deriving the scheduling utility index $U(h,b)$ follows the identical methodology utilized in the intersection scenario.
    \begin{figure}[!t] 
      \centering
      \subfloat[]{
        \includegraphics[width=0.455\columnwidth]{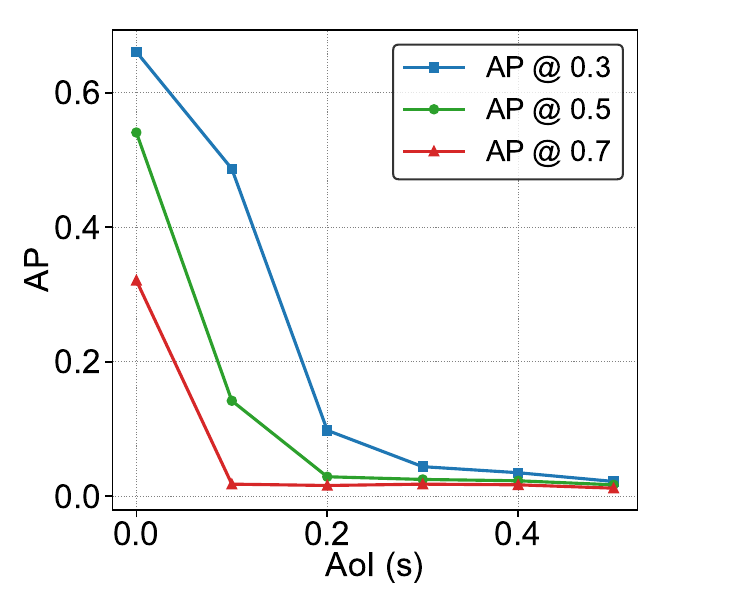} 
        \label{fig:corridor_aoi_vs_ap}
      }
    \hspace{-0.5cm}
      \subfloat[]{
        \includegraphics[width=0.455\columnwidth]{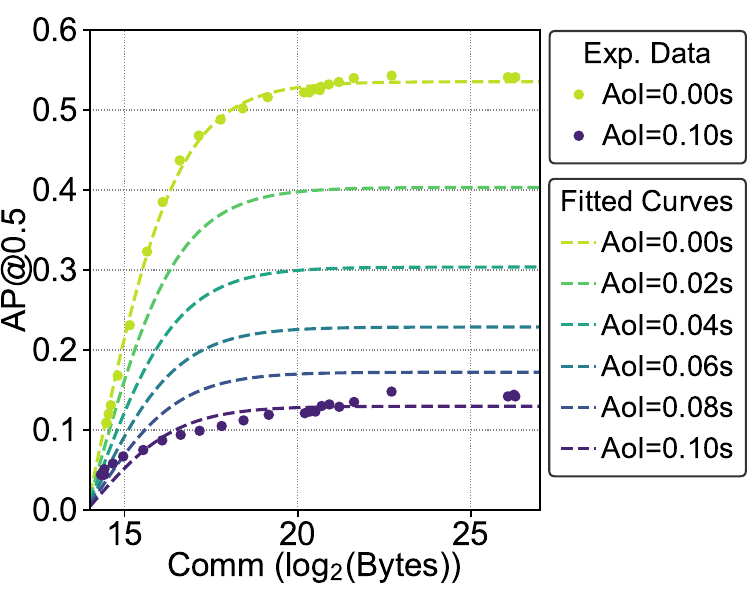} 
        \label{fig:corridor_comm_vs_ap_aois}
      }
      \caption{Penalty function fitting in the \textit{corridor} scenario. (a) shows the impact of AoI on AP and (b) depicts the impact of communication volume (in $\log_2(\text{Bytes})$) on AP@0.5 under various AoIs.}
      \label{fig:penalty_corridor}
    \end{figure}
    
    \begin{remark}
    Existing CP systems often incorporate mechanisms to compensate for latency, thereby enhancing performance in asynchronous environments \cite{wei2023asynchrony, wang2025latency, yu2025which2comm}. To ensure our empirically derived penalty function reflects this practical reality, we apply a linear performance compensation for the initial 100 ms of AoI, which limits the initial performance drop to just 0.02 in AP. We integrate a simplified compensation directly into our simulation procedure. 
    \end{remark}
    
    \section{Numerical Experiments}
    In this section, we evaluate the performance of our proposed TAMP scheduling algorithm for CP. The evaluation leverages the empirical penalty functions fitted on the RCooper dataset, as detailed in Section \ref{empirical}.  
    
    \subsection{Experimental Setup}
    We assess the performance of the scheduling algorithms by systematically evaluating their object detection accuracy measured by AP@0.5 across diverse environmental scenarios and varying key system parameters: the transmission rate, the communication budget, sensor-side computational power, and the BS scheduling capacity.
    To test the robustness of the TAMP algorithm, we utilize two types of environmental configurations. The homogeneous setup ensures all regions share identical characteristics, modeled in this experiment as a system consisting solely of corridor scenarios.
    Conversely, the heterogeneous setup comprises a mix of different scenario types, specifically an equal split between corridor and intersection scenarios in our experiment. This tests the  ability of algorithms to manage diverse penalty models simultaneously.

    The default simulation parameters are summarized in Table~\ref{tab:sim_params}. Reflecting a typical BS coverage area \cite{granda2021spatial}, we set the number of regions to A=20. The scheduling capacity is set to M=5 to account for limited parallel processing power. Following real-time network standards, the slot duration is $\tau=10$ ms \cite{zhang2023reliable}. The communication budget is set to $\Gamma_a=2$ Mb/slot , defined on masked features prior to a $32 \times$ transmission compression \cite{shu2025corange}. The channel rate follows $\mathcal{U}(1, 20)$ Mbps, consistent with IEEE 802.11p \cite{arena2019an}. Finally, processing delays are modeled by a shifted exponential distribution with shift $\psi=2$ and scale $\sigma=8$, yielding a 10 ms expected delay. These values apply unless specified otherwise.

    \begin{table}[htbp]
    \centering
    \caption{Default Simulation Parameters}
    \label{tab:sim_params}
    \begin{tabular}{l|l}
        \hline
        \textbf{Parameters} & \textbf{Values} \\
        \hline
        Number of Regions ($A$) & 20 \\
        Scheduling Capacity ($M$) & 5 \\
        Slot Length ($\tau$) & 10\,ms \\
        Communication Volume Budget ($\Gamma_a$) & 2\,Mb/slot \\
        Rate Distribution & $\mathcal{U}(1, 20)$\,Mbps \\
        Extraction Delay ($d^{\text{ext}}$) & Shifted Exp. ($\psi=2, \sigma=8$)\,ms \\
        Detection Delay ($d^{\text{det}}$) & Shifted Exp. ($\psi=2, \sigma=8$)\,ms \\
        Control Parameter ($V_a$) & $10^{-3}$ \\
        Simulation Horizon & 5000 slots \\ 
        \hline
    \end{tabular}
    \end{table}

    \subsection{Performance Comparison} \label{Performance Comparison}
    We compare TAMP algorithm against four baselines:
    \begin{itemize}
        \item \textbf{Age-Prio:} Schedules the top $M$ idle regions based on the highest current AoI, prioritizing the regions with the most stale information.
        \item \textbf{Rate-Prio:} Schedules the top $M$ idle regions based on the highest available transmission rates, prioritizing immediate communication efficiency.
        \item \textbf{GEA (Greedy Exchange Algorithm) \cite{qin2022aoi}:} It uses the expected AoI reduction for the current slot as its scheduling metric, defined as the difference between the projected AoI if the region remains idle versus if scheduled:
        \begin{equation} 
            \Pi_{a,k}^{\text{GEA}} = (h_{a,k} + \tau) - \mathbb{E}[d_{a,k}], 
        \end{equation} 
        with slot duration $\tau$ and the expected delay $\mathbb{E}[d_{a,k}]$.
        \item \textbf{Max-Weight \cite{sun2023optimizing}:} A Lyapunov-based algorithm that defines scheduling priority using a metric that considers the long-term, cumulative impact of decisions:
        \begin{equation} \Pi_{a,k}^{\text{MW}} = \frac{W(h_{a,k})}{\bar{d}_a} - V Q_{a,k} b_{a,k}, \end{equation} 
        where $W(h_{a,k})$ is the weight function dependent only on AoI, derived in \cite{sun2023optimizing} that captures the scheduling worth.
    \end{itemize}
    
    \begin{figure*}[!t] 
    \centering
    \newcommand{\figwidth}{0.32\textwidth} 
    \subfloat[]{
        \includegraphics[width=\figwidth]{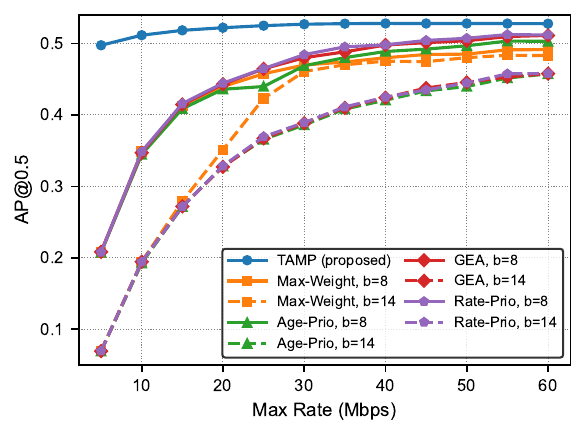}
        \label{fig:rate_20}
    }
    \hspace{-0.5cm}
    \subfloat[]{
        \includegraphics[width=\figwidth]{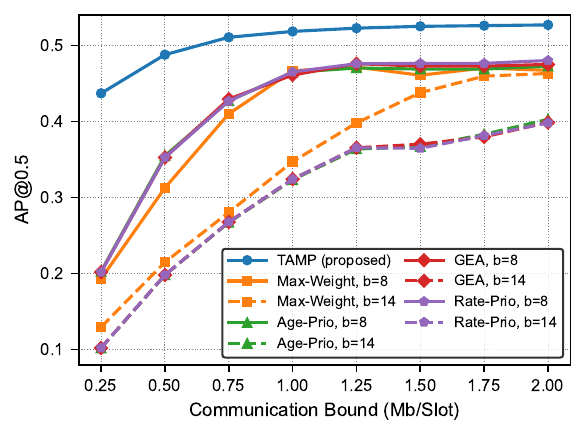}
        \label{fig:gamma_20}
    }
    \hspace{-0.5cm}
    \subfloat[]{
        \includegraphics[width=\figwidth]{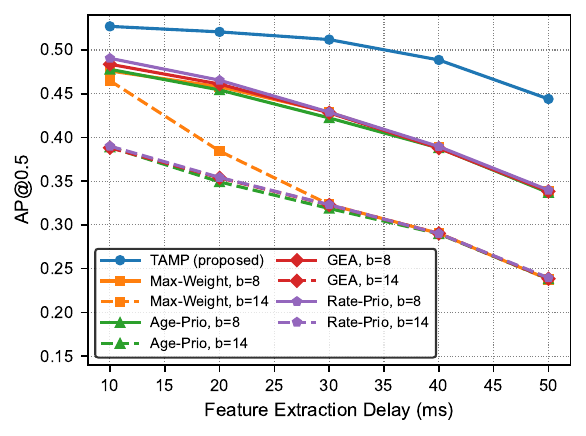}
        \label{fig:delay_20}
    }
    \caption{Performance comparison in \emph{homogeneous} (20 corridors) scenarios versus three key parameters: (a) transmission rate, (b) long-term communication bound, and (c) the expected of sensor-side feature extraction delay.
    }
    \end{figure*}
    
    \begin{figure*}[!t] 
    \centering
    \newcommand{\figwidth}{0.32\textwidth} 
    \subfloat[]{
        \includegraphics[width=\figwidth]{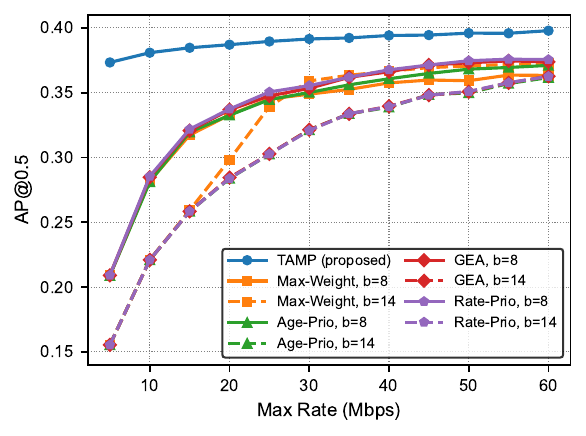}
        \label{fig:rate_10_10}
    }
    \hspace{-0.5cm}
    \subfloat[]{
        \includegraphics[width=\figwidth]{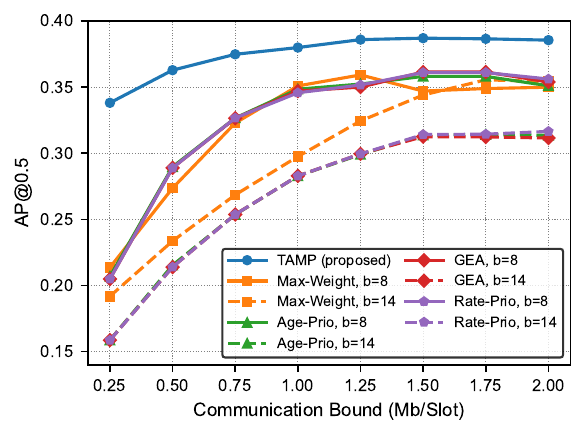}
        \label{fig:gamma_10_10}
    }
    \hspace{-0.5cm}
    \subfloat[]{
        \includegraphics[width=\figwidth]{Figs/delay-ap_20cor_2bparam.pdf} 
        \label{fig:delay_10_10}
    }
    \caption{Performance comparison in \emph{heterogeneous} (10 corridors + 10 intersections) scenarios versus three key parameters: (a) transmission rate, (b) long-term communication bound, and (c) the expected of sensor-side feature extraction delay.    }
    \end{figure*}

    \subsubsection{Impact of the Transmission Rate}
    In this experiment, we assess the algorithm ability to adapt to different channel qualities. We vary the maximum available rate, $x$, of the uniform distribution $\mathcal{U}(1,x)$ from 5 to 60 Mbps. All other parameters are set to their default values as listed in Table~\ref{tab:sim_params}.
    The results for the homogeneous scenario are presented in Fig.~\ref{fig:rate_20}, while the results for the heterogeneous scenario are shown in Fig.~\ref{fig:rate_10_10}. In both scenarios, our proposed algorithm TAMP consistently and significantly outperforms all baselines. The performance gap is most pronounced in the low-to-mid rate regimes. For instance, at a maximum rate of 20 Mbps in the homogeneous scenario, TAMP improves the AP by over 15\% compared to the best-performing baseline. This is because the baseline algorithms employ a non-adaptive strategy that uses a fixed communication volume for each task, which was set to 8 Mb or 14 Mb in our experiments. In contrast, TAMP adaptively co-designs the scheduling decision and the communication volume for each region based on its instantaneous state, leading to more efficient resource utilization. While the performance of all algorithms saturates at very high rates as the system becomes limited by AoI, the adaptability of TAMP ensures it maintains the highest performance. 

    \subsubsection{Impact of the Communication Bound}
    Here, we investigate the impact of the long-term communication budget by varying $\Gamma_a$ from 0.25 to 2.0 Mb/Slot. Crucially, this budget is designed based on feature size after spatial-confidence masking but prior to the $32 \times$ compression applied immediately before transmission. All other parameters are set to their default values as specified in Table~\ref{tab:sim_params}. 
    The results illustrated in Fig.~\ref{fig:gamma_20} and Fig.~\ref{fig:gamma_10_10} show that TAMP consistently outperforms all baselines. While the performance of all algorithms improves with a larger budget before plateauing around $\Gamma=1.25$ Mb/Slot, the advantage of TAMP is most pronounced when resources are scarce. Specifically, under a tight budget of $\Gamma=0.75$ Mb/Slot, TAMP achieves a relative performance improvement of near 21\% over the best-performing baseline. This highlights the effectiveness of the Lyapunov-based algorithm of TAMP, which is designed to intelligently manage long-term constraints, in contrast to the myopic greedy baselines that struggle under stringent budget limitations.

    \subsubsection{Impact of the Sensor Computational Capability}
    To evaluate the algorithm robustness against varying sensor-side computational power, we adjust the mean of the stochastic feature extraction delay from 10 ms to 50 ms. Other parameters are kept at their default settings as shown in Table~\ref{tab:sim_params}.
    The results illustrated in Fig.~\ref{fig:delay_20} and Fig.~\ref{fig:delay_10_10}, show that while performance of all algorithms degrades with increased delay, TAMP algorithm consistently maintains the highest AP, and the superiority becomes even more pronounced as the processing delay grows. For instance, in the higher delay regime of 30 ms to 50 ms, TAMP achieves a relative performance improvement of 18\% to 29\% over the best-performing baseline. 
    \begin{figure*}[!t] 
    \centering
    \newcommand{\myfigwidth}{0.32\textwidth} 
        \begin{minipage}{\myfigwidth}
        \centering
        \includegraphics[width=\textwidth]{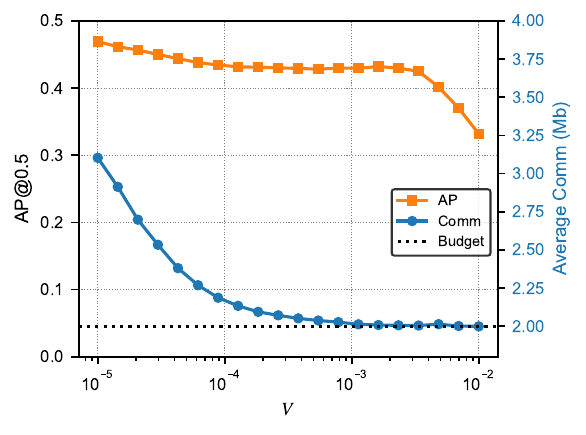} 
        \caption{Impact of trade-off parameter $V$.} 
        \label{fig:tradeoff_v}
    \end{minipage}
    \begin{minipage}{\myfigwidth}
        \centering
        \includegraphics[width=\textwidth]{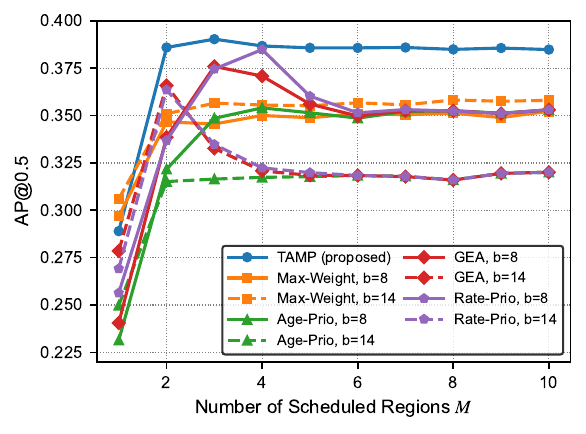} 
        \caption{Impact of BS Capacity $M$.} 
        \label{fig:multi_10_10_mix}
    \end{minipage}
    \begin{minipage}{\myfigwidth}
        \centering
        \includegraphics[width=\textwidth]{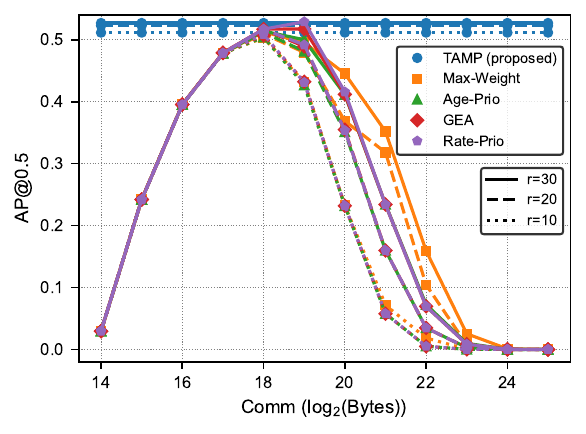}
        \caption{Impact of pre-set Comm $b$.} 
        \label{fig:adaptive_comm}
    \end{minipage}
    \end{figure*}
    \subsubsection{Impact of the BS Scheduling Capacity}
    This experiment tests the scalability of the algorithms with respect to the BS concurrent scheduling capacity, $M$. We vary $M$ from 1 to 10, while all other parameters follow the default configuration in Table~\ref{tab:sim_params}.
    The results depicted in Fig.~\ref{fig:multi_10_10_mix} show that as $M$ ranging from 2 to 4, the performance of TAMP forms the upper envelope of all baselines. In the plateau region where $M>5$, TAMP maintains a stable performance gain of approximately 9.5\% over the best-performing baseline. This demonstrates the ability of TAMP to effectively leverage additional scheduling capacity by intelligently selecting the most valuable regions.

    \subsection{Analysis of the Scheduling Trade-off}
    \subsubsection{Impact of the Trade-off Parameter $V$}
    We analyze the role of the trade-off parameter $V$. This parameter balance the long-term perception accuracy (AP) and the communication budget. 
    As shown in Fig.~\ref{fig:tradeoff_v}, a smaller $V$ value leads to a higher steady-state AP, as it places less emphasis on the immediate communication cost. Conversely, a larger $V$ value results in a much faster convergence to the communication budget.

    \subsubsection{Advantage of Adaptive Communication Volume Allocation}
    We demonstrate the significant advantage of TAMP to adaptively allocate the communication volume. In this experiment, the baselines are forced to use a fixed communication volume for every transmission, which we vary along the x-axis.
    The results is presented in Fig.~\ref{fig:adaptive_comm}. The rate distribution is set to a uniform distribution $\mathcal{U}(1,x)$ Mbps. The performance of the baselines are highly sensitive to the pre-set communication volume. Their performance curves first rise, as a larger volume allows for richer features, but then fall once the excessive volume leads to higher task delays. This shows that any fixed volume is only optimal under a narrow set of conditions. In contrast, our TAMP algorithm appears as a nearly horizontal line at the top of the plot, demonstrating a consistently high level of performance. This is because it adaptively calculates and deploys the optimal communication volume in each slot, rather than being constrained by a pre-set value. 

    \section{Conclusion}
    In this paper, we addressed the fundamental trade-off between perception accuracy and communication resource utilization in CP. We established that accurately modeling this trade-off and developing intelligent scheduling strategies are crucial for achieving efficient and reliable performance.
    Our primary contribution is the development of a systematic framework for co-designing communication and perception. We began by empirically analyzing a real-world dataset to characterize the non-linear relationships between AP, AoI, and communication volume in corridor and intersection scenarios. Based on this analysis, we derived a generalized penalty function to quantify performance degradation. Leveraging this function, we proposed the Timeliness-Aware Multi-region Prioritized (TAMP) scheduling algorithm, which adaptively allocates communication resources by simultaneously considering real-time channel conditions and information freshness.
    Extensive numerical experiments validated the superiority of our proposed method. Results demonstrate that TAMP consistently outperforms baselines including Age-Prio, Rate-Prio, GEA, and Max-Weight, achieving an AP improvement of up to 27\% across various configurations. Furthermore, our analysis highlights the capacity of TAMP for adaptively and efficiently allocating communication resources to balance detection accuracy against communication overhead.
    In summary, this work provides a theoretically principled framework for the joint design of communication and perception in CP systems. Our proposed penalty function and scheduling algorithm offer a practical solution for achieving high-performance, resource-efficient CP, paving the way for safer and more intelligent applications, from autonomous driving to large-scale smart city monitoring.
    
	\appendices{}
	\section{Proof of Lemma \ref{lem1}} 
    \label{sec:appendix_a}
    The long-term average penalty of $\mathcal{P}1$ is:
    \begin{equation}
        \limsup_{K\to\infty} \frac{1}{K} \mathbb{E}_\pi \left[ \sum_{k=0}^{K} f(h_k, \bm{b}_k) \right].
        \label{ltavg_pnt}
    \end{equation}
    We decompose the long-term average penalty into the sum of each interval. An interval is defined as the period between the completion time of two consequence tasks. For example, the $m$-th interval is from slot $k_m'$ to $k_{m+1}'$. Thus, \eqref{ltavg_pnt} yields:
    \begin{equation}
        \limsup_{K\to\infty} \frac{M_K}{K} \mathbb{E}_\pi \left[ \sum_{k=k'_m}^{k'_{m+1}} f(h_k, \bm{b}_k) \right].
        \label{sum_interval}
    \end{equation}

    The length of the $m$-th interval is $T_m$.
    Let $h_{k_m}$ be the AoI at the scheduling slot $k_m$. Let $\bm{b}_{k_m}$ be the communication volume allocated, which is constant during the $m$-th interval. Let $d_{k_m}$ be the total delay of that task started at slot $k_m$.
    Based on the geometric pattern of the penalty function, as illustrated in Fig.~\ref{aoi_penalty}, the cumulative penalty over the $m$-th interval in \eqref{sum_interval} can be expressed as:
    \begin{align}
        \sum_{k=k'_m}^{k'_{m+1}} f(h_k, \bm{b}_k)  
        &=F(h_{k_{m+1}}+d_{k_{m+1}}+1, \bm{b}_{k_m}) \notag \\
        &\quad - F(d_{k_m}, \bm{b}_{k_m}), \label{cum_pen}
    \end{align}
    The expected length of that interval is:
    \begin{equation}
        T_m = h_{k_{m+1}}+d_{k_{m+1}}+1-d_{k_m}.
    \end{equation}
    When considering the long-term average performance, the expected interval duration under stationary conditions is
    \begin{equation}
        \mathbb{E}_\pi \left[ T \right] = \bar{h}+\bar{d}+1-\bar{d} = \bar{h}+1.
    \end{equation} 
    Due to the basic renewal theory, we yields:
    \begin{equation}
        \lim_{K\to\infty} \frac{M_K}{K} = \frac{1}{\mathbb{E}_\pi \left[ T \right]} = \frac{1}{\bar{h}+1}. \label{eq:renewal}
    \end{equation}

    To establish a tractable lower bound on the discrete cumulative penalty, we utilize the integral of $\Tilde{f}$ and the following inequality holds:
    \begin{equation}
        \sum_{x=0}^{h} f(x,\bm{b}) \ge \int_0^h \Tilde{f}(x,\bm{b})dx,
    \end{equation}
    The instantaneous penalty $\Tilde{f}(x,\bm{b})$ is a non-decreasing function of the AoI $x$ as more stale information incurs a greater penalty. Thus, $\Tilde{F}''(x,\bm{b})=\Tilde{f}'(x,\bm{b}) \ge 0$.
    Therefore, $\Tilde{F}(h,\bm{b})$ is a convex function of $h$.
    
    Note that $\bm{b}_{k_m}$ is independent to $h_{k_{m+1}}$ and $d_{k_{m+1}}$.
    Taking the long-term average expectation and applying the Jensen's inequality to \eqref{cum_pen}, which gives:
    \begin{align}
        &\mathbb{E}_\pi[F(h_{k_{m+1}}+d_{k_{m+1}+1}, \bm{b}_{k_m})]-\mathbb{E}_\pi[F(d_{k_m}, \bm{b}_{k_m})] \nonumber \\
        &\ge \Tilde{F}(\bar{h} + \bar{d}+1, \bm{b}_{k_m})-\mathbb{E}_\pi[F(d_{k_m}, \bm{b}_{k_m})]. 
        \label{eq:jensen's}
    \end{align}
        
    Under stationary conditions, the random variables $h$, $d$, and $\bm{b}$ are identically distributed across different intervals $m$, hence we drop the subscript $k_m$.
    Substituting \eqref{eq:renewal} and \eqref{eq:jensen's} into \eqref{sum_interval}  yields the final lower bound for the long-term average penalty:
    \begin{equation}
        \bar{f} \ge \frac{\Tilde{F}(\bar{h} + \bar{d}+1, \bm{b}) - \mathbb{E}_\pi[F(d, \bm{b})]}{\bar{h}+1}.
    \end{equation}
    \qed

    \section{Proof of Lemma \ref{lem:convex}}
    \label{sec:appendix_b}
    The objective function of Problem $\mathcal{P}2$, where $h$ is the optimization variable, is: \begin{equation} 
        \frac{1}{h} \left( \Tilde{F}(h+\bar{d}, \bm{b}) - \mathbb{E}[F(d,\bm{b})] \right). 
    \end{equation} For fixed parameters $\bm{b}$ and $\bar{d}$, and $\mathbb{E}[F(d,\bm{b})]$ is treated as a constant term. We will prove the quasi-convexity of: 
    \begin{equation} 
        y(h) = \frac{1}{h} \left( \Tilde{F}(h+\bar{d}) - C \right),\quad h>0,
    \end{equation} 
    where $C$ is a constant. 
    
    Based on the physical meaning of the instantaneous penalty, the value loss of information increases as time evolves, therefore $\Tilde{f}(t)$ is a non-decreasing function of $t$. 
    This implies that the cumulative penalty $\Tilde{F}(h) = \int_0^h \Tilde{f}(t) dt$ satisfies:
    \begin{equation}
        \Tilde{F}''(h) = \Tilde{f}'(h)\ge 0.
    \end{equation}
    Thus, $\Tilde{F}(h)$ is a convex function. Defined by composition with an affine function, $\Tilde{F}(h+\bar{d})$ is also a convex function. 
    
    A function $y(h)$ is quasi-convex if and only if its sublevel set $S_\gamma=\{h|y(h)\le \gamma\}$ is a convex set for every $\gamma \in \mathbb{R}$. For a given $\gamma$, the sublevel set $S_\gamma$ is defined by the inequality: 
    \begin{equation}
        \frac{\Tilde{F}(h+\bar{d}) - C}{h} \le \gamma.
    \end{equation}
    Since $h>0$, we have:
    \begin{equation}
        \Tilde{F}(h+\bar{d}) - \gamma h \le C.
    \end{equation}
    
    We define the function $H(h) = \Tilde{F}(h+\bar{d}) - \gamma h$. The first term $\Tilde{F}(h+\bar{d})$ has been proved as a convex function. The second term $-\gamma h$ is a linear function, which is also convex.
    
    Since the sum of two convex functions is convex, $H(h)$ is a convex function. The sublevel set $S_\gamma=\{h|H(h)\le C\}$, which is the sublevel set of a convex function. The sublevel set of any convex function is always a convex set. 
    Thus, $y(h)$ is a quasi-convex function, $\mathcal{P}2$ is a quasi-convex optimization problem.
    \qed
    
   \section{Proof of Theorem \ref{theorem:dpp}}
    \label{sec:appendix_c}
    For simplification, we omit the subscript $a$.
    Let $L(Q_k) \triangleq \frac{1}{2} Q_k^2$ be the quadratic Lyapunov function for the virtual queue $Q_k$ defined in \eqref{eq:virtual_queue}. The drift can be bounded by first analyzing the change in the squared queue length:
    \begin{align}
        Q_{k+1}^2 - Q_k^2
        &= \left( \max\{ Q_k + b_k - \Gamma, 0 \} \right)^2 - Q_k^2 \notag \\
        &\le (b_k - \Gamma)^2 + 2 Q_k (b_k - \Gamma),
    \end{align}
    where the inequality follows from the property $(\max\{x,0\})^2 \le x^2$. 
    
    Taking the conditional expectation of $\frac{1}{2}(Q_{k+1}^2 - Q_k^2)$ given $Q_k$ yields the drift bound:
    \begin{equation}
        \Delta(Q_k) \le \frac{1}{2} \mathbb{E} \left[ (b_k - \Gamma)^2 \mid Q_k \right] + Q_k \, \mathbb{E} \left[ b_k - \Gamma \mid Q_k \right].
    \end{equation}
    Substituting the control decision $b_k = u_k b_k^*$ and rearranging the terms gives:
    \begin{align}
        \Delta(Q_k) \le \underbrace{\frac12 \mathbb{E} \!\left[ (u_k b_k^* - \Gamma)^2 \,\middle|\, Q_k \right]}_{\text{bounded by a constant } C}
        + Q_k \mathbb{E} \!\left[ u_k b_k^* \mid Q_k \right] - Q_k \Gamma.  \nonumber
        \label{eq:final_drift_bound_in_proof}
    \end{align}
    Now, we add the penalty term to both sides to obtain an upper bound on the drift-plus-penalty expression:
    \begin{align}
        & \Delta(Q_k) - V \, \mathbb{E}[U(h_k, b_k^*) \, u_k \mid Q_k] \notag \\
        & \le C + Q_k \mathbb{E} \!\left[ u_k b_k^* \mid Q_k \right] - Q_k \Gamma - V \, \mathbb{E}[U(h_k, b_k^*) \, u_k \mid Q_k].
        \nonumber
    \end{align}
    To minimize this upper bound in each slot, the algorithm must choose $u_k$ to minimize the right-hand side of the inequality. The terms $C$ and $-Q_k \Gamma$ are constant with respect to the optimization variable $u_k$ and can be dropped, yielding the desired result in \eqref{eq:per_slot_objective}.
    \qed

\bibliographystyle{IEEEtran}
\bibliography{journal_refs}

\end{document}